\documentclass{article}
\usepackage[final, main]{neurips_2026}
\makeatletter
\renewcommand{\@notice}{}
\makeatother
\makeatletter
\@ifundefined{algorithmic}{}{}
\@ifundefined{endalgorithmic}{}{}
\makeatother

\usepackage{multirow}
\usepackage{float} 
\usepackage{array}
\usepackage{amsmath, amssymb, amsthm}
\usepackage{algorithm}     
\usepackage{algpseudocode} 
\newtheorem{theorem}{Theorem}
\newtheorem{proposition}{Proposition}
\newtheorem{lemma}{Lemma}
\newtheorem{corollary}{Corollary}
\newtheorem{assumption}{Assumption}
\usepackage[table]{xcolor}
\definecolor{rpbred}{RGB}{255,230,230} 

\usepackage{mathtools}
\usepackage{bbm} 

\usepackage{graphicx}
\usepackage{subcaption}
\usepackage{placeins}
\usepackage{booktabs}
\usepackage{siunitx}
\newcommand{\res}[2]{$#1_{\pm #2}$}
\newcommand{\bres}[2]{$\mathbf{#1}_{\pm \mathbf{#2}}$}

\usepackage{enumitem}

\usepackage{fvextra}

\usepackage{microtype}

\usepackage{titlesec}
\titlespacing*{\section}{0pt}{*0.8}{*0.25}
\titlespacing*{\subsection}{0pt}{*0.45}{*0.2}
\titlespacing*{\subsubsection}{0pt}{*0.35}{*0.15}
\AtBeginDocument{%
  \setlength{\textfloatsep}{8pt plus 2pt minus 2pt}%
  \setlength{\floatsep}{6pt plus 2pt minus 2pt}%
  \setlength{\intextsep}{8pt plus 2pt minus 2pt}%
  \setlength{\dbltextfloatsep}{8pt plus 2pt minus 2pt}%
  \setlength{\dblfloatsep}{6pt plus 2pt minus 2pt}%
  \setlength{\abovecaptionskip}{3pt}%
  \setlength{\belowcaptionskip}{1pt}%
  \setlength{\abovedisplayskip}{3pt plus 1pt minus 1pt}%
  \setlength{\belowdisplayskip}{3pt plus 1pt minus 1pt}%
  \setlength{\abovedisplayshortskip}{1pt plus 1pt}%
  \setlength{\belowdisplayshortskip}{1pt plus 1pt}%
  \setlength{\jot}{2pt}%
}
\setlist{topsep=2pt,partopsep=0pt,itemsep=1pt,parsep=0pt}

\usepackage{hyperref}

\newcommand{\E}{\mathbb{E}}
\newcommand{\KL}{\mathrm{KL}}
\newcommand{\Var}{\mathrm{Var}}
\newcommand{\Hc}{\mathrm{H}} 
\newcommand{\R}{\mathbb{R}}

\DeclareMathOperator*{\argmin}{arg\,min}

\DefineVerbatimEnvironment{promptverb}{Verbatim}{
  breaklines=true,
  breakanywhere=true,
  fontsize=\small
}

\begin{document}

\title{Structural Rationale Distillation via Reasoning Space Compression}

\author{%
  Jialin Yang\thanks{Equal contribution.} \\
  University of Calgary \\
  Calgary, Canada
  \And
  Jiankun Wang\footnotemark[1] \\
  University of Michigan \\
  Ann Arbor, MI, USA
  \And
  Jiajun Wu \\
  University of Calgary \\
  Calgary, Canada
  \AND
  Henry Leung \\
  University of Calgary \\
  Calgary, Canada
  \And
  Jiayu Zhou \\
  University of Michigan \\
  Ann Arbor, MI, USA
  \And
  Steve Drew\thanks{Corresponding author: \texttt{steve.drew@ucalgary.ca}} \\
University of Calgary \\
Calgary, Canada
}

\maketitle
\begin{abstract}
When distilling reasoning from large language models (LLMs) into smaller ones, teacher rationales for similar problems often vary wildly in structure and strategy. Like a chef who makes the same dish differently each time, this inconsistency burdens the student with noisy supervision that is hard to internalize. We propose Distillation through Reasoning Path Compression (D-RPC), which constrains the teacher to follow a compact, dynamically maintained bank of reusable high-level reasoning paths. For each training question, D-RPC retrieves the most relevant path and conditions the teacher to follow it, producing rationales that are consistent across similar problems yet diverse enough to cover different problem types.
A PAC-Bayes analysis formalizes the resulting trade-off between bank size and coverage: smaller banks reduce supervision entropy but risk coverage gaps, and the generalization bound identifies an optimal intermediate size confirmed by our ablations. Across five math and commonsense reasoning benchmarks with two student models, D-RPC consistently outperforms chain-of-thought distillation, freeform rationale generation, direct distillation, and structured-supervision baselines, while using fewer tokens than template-heavy alternatives.
\end{abstract}

\section{Introduction}


Knowledge distillation trains a small language model (SLM) to reason by
imitating a large language model (LLM) teacher~\citep{distill_reasoning_slm,
SLM_reasoning}. However, the teacher's rationales for similar problems often
vary substantially in structure and strategy. An apprentice chef shown a different recipe for the same dish each day learns more slowly, even when every recipe is valid. Distillation faces the same dynamic. Asked the same type of math problem
repeatedly, the teacher may generate a ``compute rate, then combine quantities''
solution once and an ``estimate total, then subtract difference'' solution the
next~\citep{cotNoisyInconsistent, CoTNotFaithful}. This \emph{rationale
divergence} has three costs. First, the student sees many one-off patterns
instead of reusable strategies, making internalization
harder~\citep{chen2025unveilingkeyfactorsdistilling}. Second, covering all
valid solution paths requires many teacher generations, inflating
cost~\citep{STaR}. Third, highly variable supervision can amplify
hallucination during fine-tuning~\citep{distill_reasoning_slm}.

Prior work has tried to mitigate this variance through structured reasoning. Approaches like Buffer-of-Thought~\citep{yang2024bufferthoughtsthoughtaugmentedreasoning} and ReasonFlux~\citep{yang2025reasonflux} reuse hierarchical thought templates to improve LLM reasoning at inference time. However, these verbose templates impose heavy context loads that SLMs, with their limited capacity and context windows, struggle to exploit~\citep{SLM_reasoning, complexpromptBenchmark}. The core question remains open: \emph{can we compress the teacher's reasoning space into a compact, consistency-preserving structure before distilling it to a student?}

We propose Distillation through Reasoning Path Compression (D-RPC), which addresses this question by giving the teacher a ``cookbook'', that is, a compact bank of canonical, high-level reasoning paths, and asking it to follow these paths when generating rationales. Just as a chef who consults a cookbook produces more consistent dishes than one who improvises freely, a teacher conditioned on a shared set of reasoning paths produces more consistent supervision. D-RPC works in two stages.
First, it builds the cookbook: a seed set of teacher-solved problems is clustered by intent to identify canonical solution strategies, forming the \emph{reasoning path bank}.
Second, it teaches from the cookbook: for each new training question, the most relevant paths are retrieved from the bank, and the teacher generates a detailed rationale conditioned on the retrieved path.
If the teacher discovers a novel valid solution not in the bank, it is buffered and periodically merged back so that the cookbook grows over time.
The student SLM is then fine-tuned via LoRA on the resulting consistent supervision.

Our main contributions are as follows.
\begin{enumerate}[leftmargin=*, itemsep=2pt, topsep=2pt]
\item We propose D-RPC, which compresses the teacher's reasoning space into a compact, dynamically maintained reasoning path bank. By routing similar questions to shared canonical paths, D-RPC reduces rationale divergence and provides more consistent supervision for SLMs.
\item We formalize a trade-off between bank size and coverage via a PAC-Bayes generalization bound whose deviation term contains an explicit $\log K_{\mathrm{bank}}$ factor: smaller banks lower supervision entropy but increase coverage slack, and vice versa. The bound predicts an optimal intermediate bank size, which we confirm through ablations and empirical study of the bound's key quantities.
\item Across five benchmarks and two student models (Llama~3.1 8B Instruct, Qwen~3 1.7B), D-RPC consistently improves over CoT, freeform, direct-distillation, and structured-supervision baselines, while using fewer tokens than template-heavy alternatives.
\end{enumerate}

\section{Related Work}

\noindent\textbf{Distilling reasoning from teachers to students.}
UNICOTT iteratively constructs structured explanations to reduce supervision noise~\citep{zhuang2025unicott}. COTCD introduces difficulty-aware curricula for stable student learning~\citep{cot_Curriculum_distill}, and RLKD captures implicit multi-branch structures instead of mimicking a single trajectory~\citep{xu2025distillingimplicitmultibranchstructure}. ReasonFlux-PRM scores intermediate steps to filter distillation data and provide dense rewards~\citep{zou2025reasonflux}. MCC-KD generates multiple rationales per question and minimizes a bidirectional KL divergence across their answer distributions to enforce \emph{intra-question} consistency~\citep{chen2023mcckdmulticotconsistentknowledge}, and MIND addresses misalignment between a teacher's ``optimal'' rationale and the student's evolving capacity by synthesizing capability-adaptive multi-perspective supervision through a teaching-assistant network~\citep{cui2026mindpassivemimicryactive}. Nevertheless, these frameworks either pass through whatever rationales the teacher generates or operate on the student-side loss, so the high variance of similar problems receiving structurally different rationales propagates into supervision~\citep{cotNoisyInconsistent, CoTNotFaithful}. D-RPC addresses this upstream by compressing the teacher's reasoning space across questions before rationale generation.

\noindent\textbf{Reusing reasoning processes at inference time.}
A parallel line of work treats reasoning as a reusable process rather than a one-off generation. Buffer of Thought and ReasonFlux reuse thought templates~\citep{yang2024bufferthoughtsthoughtaugmentedreasoning,yang2025reasonflux}. RoT retrieves thought steps from similar problems~\citep{ahmed2025retrievalofthoughtefficientreasoningreusing}, and Thought Propagation reuses solutions through analogical propagation~\citep{yu2024thoughtpropagationanalogicalapproach}. In the agent setting, ReasoningBank~\citep{ouyang2026reasoningbankscalingagentselfevolving} accumulates a memory of natural-language strategies extracted from past successful and failed trajectories and merges them into the agent's prompt, with no weight updates. All of these methods improve LLM or agent performance at inference time by strengthening prompts. However, they do not address the variance of the supervision signal used to train smaller students. D-RPC instead operates at \emph{distillation time}, compressing the teacher's supervision distribution so that consistent reasoning structure is internalized into the student's weights and no retrieval is required at deployment.

\noindent\textbf{Reasoning-path supervision for SFT.}
Skip-Thinking refines CoTs through block training, allowing non-inference blocks to be skipped~\citep{chen2025skipthinkingchunkwisechainofthoughtdistillation}. EDIT identifies key decision-making steps by comparing paired CoTs and upweights them~\citep{dai-etal-2025-capture}. QR-Distill improves multi-path refinement through quality filtering and peer model teaching~\citep{lei2025learningdiversereasoningpaths}. Reasoning Scaffolding abstracts teacher rationales into sequences of discrete semantic signals and trains the student to predict them alongside each step~\citep{wen2025reasoningscaffoldingdistillingflow}, enforcing structure within each trace rather than across related questions. Together, these methods restructure or rescaffold rationales on the student side, but do not reduce the variance of the rationales themselves. D-RPC is complementary: it constrains the teacher to generate lower-variance rationales in the first place, so that any downstream reweighting, filtering, or scaffolding operates on a cleaner signal.

\section{Methodology}
\label{sec:method}

Returning to our cooking analogy, D-RPC builds a cookbook, teaches from it, and trains the student.
In the first stage, it assembles a compact bank of canonical, high-level reasoning paths by having the teacher solve a small seed set and clustering the resulting solution strategies.
In the second stage, it retrieves the best-matching path for each new training question, conditions the teacher on that path, and collects the resulting supervision tuples. In the third stage, the student SLM is fine-tuned on the collected supervision via LoRA.
Figure~\ref{fig:methodology} illustrates the full pipeline; Algorithm~\ref{alg:drpc_compact} gives the pseudocode. In short, D-RPC takes a teacher LLM and a question set and produces two outputs: a reasoning path bank and a LoRA-trained student.
Everything in between is bank construction (Stage~1), bank-guided supervision (Stage~2), and student training (Stage~3).

\begin{figure*}[t]
  \centering
  \includegraphics[width=\textwidth]{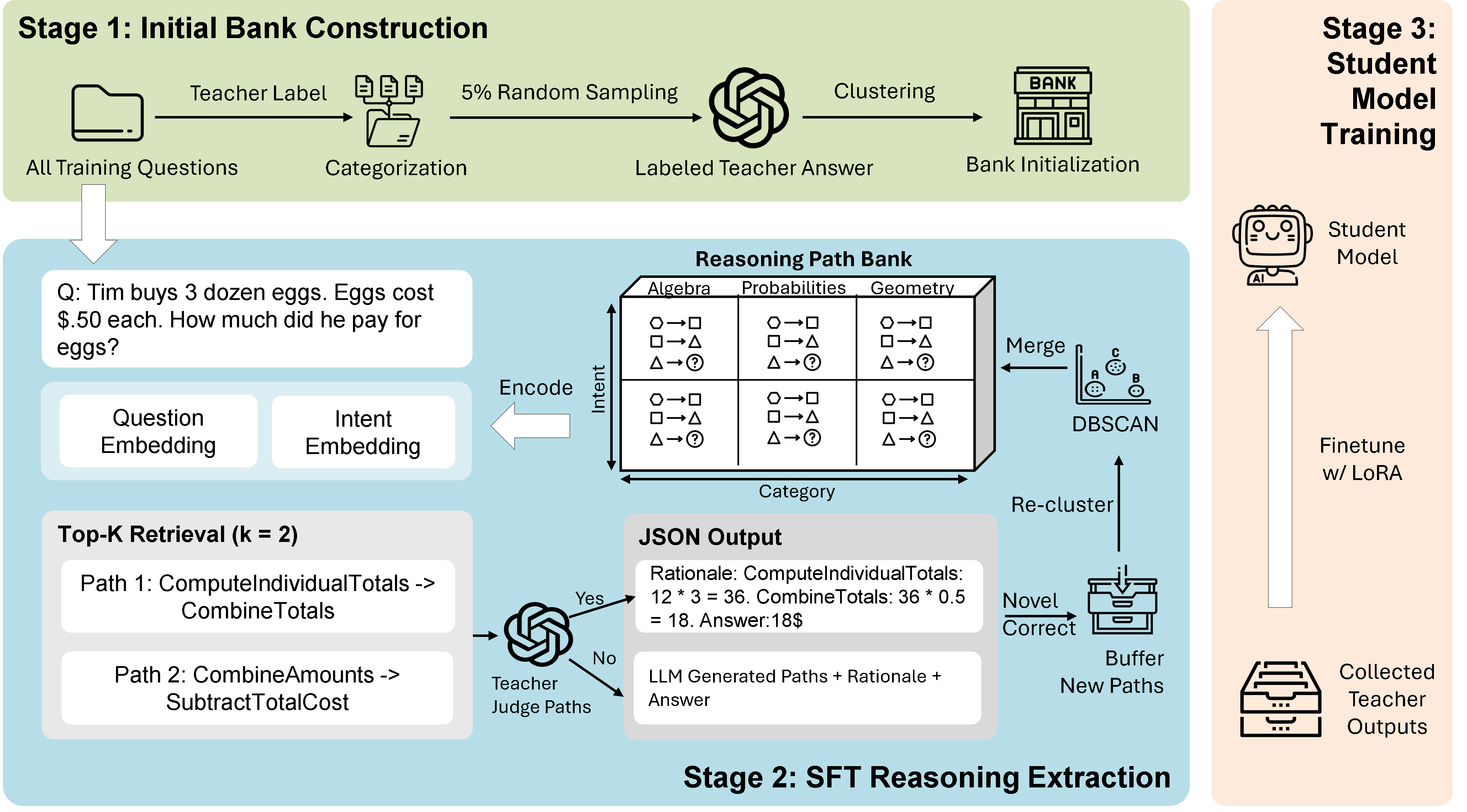}
  \caption{D-RPC pipeline.
  {Stage~1, Reasoning bank initialization.} A seed subset of questions is solved by the teacher; the resulting reasoning paths are clustered by intent to form the initial reasoning path bank.
  {Stage~2, Bank-guided supervision.} For each training question, retrieve the top-$K_{\mathrm{ret}}$ candidate paths, condition the teacher to follow one, and collect the supervision tuple. Novel correct paths are buffered and periodically merged back into the bank via re-clustering.
  {Stage~3, Student model training.} The student SLM is fine-tuned with LoRA on the collected supervision tuples.}
  \label{fig:methodology}
  \vspace{-0.05in}
\end{figure*}

\subsection{Notation and setup}
\label{subsec:setup}
Let $\{X_i\}_{i=1}^{N}$ be a dataset of training questions.
The teacher LLM labels each question with a \emph{category} $C_i$ such as ``Arithmetic'' and a finer-grained \emph{intent} $T_i$ such as ``compute unit rate''.
For distillation, the teacher produces a supervision tuple $Y_i = (\Pi_i, R_i, A_i)$ where $\Pi_i$ is a \emph{reasoning path}, meaning a short ordered sequence of abstract steps such as ``compute the unit rate, then combine quantities''; $R_i$ is a \emph{detailed rationale} that instantiates the path with concrete computations; and $A_i$ is the final answer.
We write $\mathcal{P}$ for the universe of high-level reasoning paths (ordered sequences of abstract steps) and $\mathcal{C}$ for the set of all categories.
The student is trained on $(X_i, Y_i)$ only; category and intent labels are used only to build and query the bank.

\subsection{Reasoning bank initialization}
\label{subsec:stage1}

\noindent\textbf{Categorize all questions.}
The teacher assigns a category--intent pair to every question: $(C_i, T_i) = f_{\mathrm{cat}}(X_i)$.
We then draw a category-balanced seed set $\mathcal{S}_{\mathrm{seed}}$ (5\% of training data in our experiments), sampling proportional to $\hat p(C)$ so that every problem type is represented.

\noindent\textbf{Elicit reasoning paths from the seed set.}
For each seed question $X_i \in \mathcal{S}_{\mathrm{seed}}$, the teacher generates a high-level reasoning path and answer: $(\Pi_i, A_i) = f_{\mathrm{path}}(X_i, C_i, T_i)$.

\noindent\textbf{Cluster intents to form the bank.}
Within each category $c$, we embed the teacher-generated intent strings via $e(t) = \mathrm{Embed}(t)$ and cluster them with DBSCAN~\citep{ester1996density}.
Each cluster yields a \emph{canonical intent} $\tilde t$---a representative label for a group of semantically similar intents; for example, ``unit-rate computation,'' ``compute unit rate,'' and ``rate-times-quantity'' may collapse into the single canonical intent ``unit-rate computation.''
We denote the set of canonical intents for category $c$ by $\tilde{\mathcal{T}}(c)$.
The resulting \emph{reasoning path bank} $\mathcal{B}$ maps each (category, canonical intent) pair to its set of observed reasoning paths:
$\mathcal{B}(c, \tilde t) \subseteq \mathcal{P}$.
The total number of distinct paths stored across all bank entries is
$K_{\mathrm{bank}} := \bigl|\bigcup_{c \in \mathcal{C}}\bigcup_{\tilde t \in \tilde{\mathcal{T}}(c)} \mathcal{B}(c, \tilde t)\bigr|$.

\subsection{Bank-guided supervision}
\label{subsec:stage2}

\noindent\textbf{Routing and retrieval.}
For a new training question $X_j$, the teacher predicts $(C_j, T_j) = f_{\mathrm{cat}}(X_j)$.
Because the predicted intent string $T_j$ can be noisy~\citep{cotNoisyInconsistent}, we match $X_j$ to a canonical intent via the question embedding rather than the surface form of $T_j$:
$\tilde T_j = \arg\max_{\tilde t \in \tilde{\mathcal{T}}(C_j)} \cos\bigl(e(X_j), e(\tilde t)\bigr)$.
We then retrieve the top-$K_{\mathrm{ret}}$ candidate paths from the matched bank entry:
$\mathcal{P}^{(K_{\mathrm{ret}})}(X_j) = \operatorname{TopK}_{\Pi \in \mathcal{B}(C_j, \tilde T_j)} \cos\bigl(e(X_j), e(\Pi)\bigr)$.

\noindent\textbf{Path-guided rationale generation.}
The teacher receives the question and the retrieved candidate paths, picks the best-fitting one (or conservatively refines one if none fits directly), and generates a detailed rationale following it:
$(\Pi_j, R_j, A_j) = f_{\mathrm{teach}}\bigl(X_j, \mathcal{P}^{(K_{\mathrm{ret}})}(X_j)\bigr)$,
where $\Pi_j$ is the path actually used. Details of the teacher prompt are in Appendix~\ref{app:prompt-templates}.
Every resulting tuple $(X_j, (\Pi_j, R_j, A_j))$ is added to the supervision dataset $\mathcal{D}_{\mathrm{SFT}}$ regardless of answer correctness.

\noindent\textbf{Dynamic bank refinement.}
If the teacher produces a path $\Pi_j \notin \bigcup_{c,\tilde t}\mathcal{B}(c,\tilde t)$ and the answer is correct, we buffer the record $(C_j, T_j, \Pi_j)$ into a set $\mathcal{M}$.
Once $|\mathcal{M}| \ge \tau_{\mathrm{buf}}$, we re-cluster: the raw intents $T_j$ are merged into the canonical intent clusters $\tilde{\mathcal{T}}(c)$, and the new paths are added to the corresponding bank entries $\mathcal{B}(c, \tilde t)$.
This allows the bank to grow beyond its initial coverage.

\subsection{Student fine-tuning}
\label{subsec:lora}
\vspace{-0.1in}

With $\mathcal{D}_{\mathrm{SFT}}$ collected, we fine-tune the student SLM using LoRA~\citep{hu2021loralowrankadaptationlarge}: the backbone is frozen and only low-rank adapters $\theta$ are trained.
The objective is the standard supervised NLL:
\[
\mathcal{L}_{\mathrm{SFT}}
= \E_{(X,Y) \sim \mathcal{D}_{\mathrm{SFT}}}\!\left[-\log p_{\theta}(Y \mid X)\right],
\qquad Y = (\Pi, R, A).
\]
Because related questions share reasoning paths, the supervision signal has lower variance than unconstrained teacher generation, making it easier for the student to internalize reusable solution strategies. The full D-RPC algorithm is given in Algorithm~\ref{alg:drpc_compact}.

\begin{algorithm}[t]
\small
\caption{ Distillation through Reasoning Path Compression (D-RPC)}
\label{alg:drpc_compact}
\begin{algorithmic}[1]
\Require $\!\!$Questions $\!\{\!X_i\!\}\!_{i=1}^N$; teacher function $\!f_{\mathrm{cat}}, \!f_{\mathrm{path}}, \!f_{\mathrm{teach}}$;
encoder $\!e(\cdot)$; retrieval size $\!K_{\!\mathrm{ret}}$; buffer size $
\!\tau_{\mathrm{buf}}$.
\Ensure Bank $\mathcal{B}$; supervision $\mathcal{D}_{\mathrm{SFT}}$; LoRA params $\theta$.

\For{$i=1$ \textbf{to} $N$} \State $(C_i,T_i)\gets f_{\mathrm{cat}}(X_i)$ \EndFor
\State $\mathcal{S}_{\mathrm{seed}}\gets \Call{CategoryBalancedSample}{\{X_i\},\{C_i\}}$
\For{$X_i \in \mathcal{S}_{\mathrm{seed}}$} \State $(\Pi_i,A_i)\gets f_{\mathrm{path}}(X_i,C_i,T_i)$ \EndFor
\State $(\mathcal{B},\{\tilde{\mathcal{T}}(c)\})\gets \Call{InitBank}{\mathcal{S}_{\mathrm{seed}},\{(C_i,T_i,\Pi_i)\}}$
\State $\mathcal{D}_{\mathrm{SFT}}\gets\emptyset$;\ $\mathcal{M}\gets\emptyset$
\For{$j=1$ \textbf{to} $N$}
  \State $(C_j,T_j)\gets f_{\mathrm{cat}}(X_j)$
  \State $\tilde T_j\gets \arg\max_{\tilde t\in\tilde{\mathcal{T}}(C_j)} \cos(e(X_j),e(\tilde t))$
  \State $\mathcal{P}^{(K_{\mathrm{ret}})}\gets \operatorname{TopK}_{\Pi\in\mathcal{B}(C_j,\tilde T_j)} \cos(e(X_j),e(\Pi))$
  \State $(\Pi_j,R_j,A_j)\gets f_{\mathrm{teach}}(X_j,\mathcal{P}^{(K_{\mathrm{ret}})})$
  \State $\mathcal{D}_{\mathrm{SFT}}\gets \mathcal{D}_{\mathrm{SFT}}\cup\{(X_j,(\Pi_j,R_j,A_j))\}$
  \If{\textsc{IsCorrect}$(A_j)$ \textbf{and} $\Pi_j\notin\bigcup_{c,\tilde t}\mathcal{B}(c,\tilde t)$}
    \State $\mathcal{M}\gets \mathcal{M}\cup\{(C_j,T_j,\Pi_j)\}$
  \EndIf
  \If{$|\mathcal{M}|\ge \tau_{\mathrm{buf}}$}
    \State $\mathcal{B}\gets \Call{Recluster}{\mathcal{B},\mathcal{M}}$;\ $\mathcal{M}\gets\emptyset$
  \EndIf
\EndFor
\State $\theta\gets \arg\min_{\theta}\ \mathbb{E}_{(X,Y)\sim\mathcal{D}_{\mathrm{SFT}}}\big[-\log p_{\theta}(Y\mid X)\big]$
\State \Return $\mathcal{B},\theta$
\end{algorithmic}
\end{algorithm}

\section{Theoretical Analysis}
\label{sec:theory}

Why does a moderate-sized bank work best?
In learning terms, an overly small bank induces high approximation bias because many examples are assigned to mismatched path supports, whereas an overly large bank increases supervision entropy and reduces parameter sharing across examples.
The best regime balances coverage against compression.
This section formalizes that intuition.
We first state the main result, then unpack the machinery behind it.

\textbf{Scope.} The analysis below applies specifically to the LoRA-based SFT instantiation described in Section~\ref{sec:method}. Proposition~\ref{prop:entropy-baseline}, which controls entropy via banking, is general, but the KL-control argument in Propositions~\ref{prop:norm-control}--\ref{prop:kl-bound} relies on the LoRA parameterization. Convergence and information-theoretic guarantees for low-rank distillation more broadly have been studied in~\citep{soarez2026demystifyinglowrankknowledgedistillation}; our result is complementary in that the supervision-entropy floor is controlled explicitly through the bank-size term $\log K_{\mathrm{bank}}$, which their analysis does not consider. Extending to other fine-tuning regimes would require a different complexity argument.

\subsection{Main result: the trade-off between bank size and coverage}
\label{subsec:main-thm}

\begin{theorem}[Banked LoRA distillation bound]
\label{thm:main}
Under Assumptions~\ref{assump:bounded-nll}--\ref{assump:reg-erm} (stated below),
let \(\hat\theta_S\) be the regularized LoRA minimizer and define a Gaussian posterior \(\mathsf Q_S = \mathcal{N}(\hat\theta_S, \sigma_{\mathrm{post}}^2 I_m)\) with prior \(\mathsf P = \mathcal{N}(0, \sigma_0^2 I_m)\).
Then for any \(\delta \in (0,1)\), with probability at least \(1-\delta\) over the training set~\(S\),
\begin{equation}
L(\mathsf Q_S)
\;\le\;
\underbrace{\widehat L_S(\mathsf Q_S)}_{\text{training loss}}
+
\underbrace{\sqrt{\frac{2\tau \cdot M \cdot N}{n}}}_{\text{generalization gap}}
+
\underbrace{\frac{c\tau \cdot N}{n}}_{\text{lower-order}},
\label{eq:main}
\end{equation}
where $c > 0$ is a universal constant, $\tau$ bounds the per-example loss, and:
\begin{align}
M &:= \log K_{\mathrm{bank}} + \E[\Hc(R\mid X,\Pi)] + \E[\Hc(A\mid X,\Pi,R)] + \varepsilon,
\label{eq:def-M} \\[3pt]
N &:= \underbrace{\tfrac{1}{\sigma_0^2\lambda}\,\widehat L_S(0)}_{\mathclap{\text{frozen-model fit}}}
+ \tfrac{m}{2}(\rho - 1 - \ln\rho)
+ \ln\tfrac{1}{\delta}.
\label{eq:def-N}
\end{align}
\end{theorem}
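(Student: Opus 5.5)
The plan is to combine three ingredients: a Bernstein-type (variance-sensitive) PAC-Bayes inequality, a bound on the loss variance in terms of the supervision entropy $M$, and a closed-form bound on $\KL(\mathsf Q_S\|\mathsf P)$ in terms of $N$. The propositions referenced in the scope paragraph provide the last two: Proposition~\ref{prop:entropy-baseline} for the entropy decomposition, and Propositions~\ref{prop:norm-control}--\ref{prop:kl-bound} for the KL control. I will assume them in the order they appear.

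\textbf{Step 1 (PAC-Bayes--Bernstein).} Under Assumption~\ref{assump:bounded-nll}, the per-example NLL $\ell(\theta;X,Y)=-\log p_\theta(Y\mid X)$ takes values in $[0,\tau]$. For losses bounded by $\tau$ with second moment $V$, a PAC-Bayes--Bernstein inequality (e.g.\ Tolstikhin--Seldin, or Catoni's bound optimized over a grid of $\lambda$) gives, with probability $1-\delta$ and simultaneously for all posteriors,
\[
L(\mathsf Q)\le \widehat L_S(\mathsf Q)+\sqrt{\tfrac{2V(\KL(\mathsf Q\|\mathsf P)+\ln\frac1\delta)}{n}}+\tfrac{c\tau(\KL(\mathsf Q\|\mathsf P)+\ln\frac1\delta)}{n}.
\]
Because a union bound is needed over the $\lambda$-grid, the $\log\log n$ term will be absorbed into $c$ or into $N$.

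\textbf{Step 2 (variance via banked entropy).} I will bound $V\le \tau M$ using $\E[\ell^2]\le\tau\,\E[\ell]$ and then bound the expected NLL of the (near-)optimal predictor by the conditional entropy of the supervision. The chain rule gives $\Hc(Y\mid X)=\Hc(\Pi\mid X)+\Hc(R\mid X,\Pi)+\Hc(A\mid X,\Pi,R)$. Since $\Pi$ is supported on the bank, $\Hc(\Pi\mid X)\le\log K_{\mathrm{bank}}$. Proposition~\ref{prop:entropy-baseline} then supplies $\E\ell\le\Hc(Y\mid X)+\varepsilon$, where $\varepsilon$ is the approximation slack of the model class. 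This yields $V\le\tau M$ and explains the $\sqrt{2\tau M\cdot N/n}$ shape.

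\textbf{Step 3 (KL control).} For Gaussians,
\[
\KL(\mathsf Q_S\|\mathsf P)=\tfrac{\|\hat\theta_S\|^2}{2\sigma_0^2}+\tfrac m2(\rho-1-\ln\rho),\qquad \rho=\sigma_{\mathrm{post}}^2/\sigma_0^2.
\]
For the norm term, Assumption~\ref{assump:reg-erm} ensures that $\hat\theta_S$ minimizes $\widehat L_S(\theta)+\frac\lambda2\|\theta\|^2$. Comparing with $\theta=0$ gives $\frac\lambda2\|\hat\theta_S\|^2\le\widehat L_S(0)$, hence $\|\hat\theta_S\|^2/(2\sigma_0^2)\le \widehat L_S(0)/(\sigma_0^2\lambda)$; this is Proposition~\ref{prop:norm-control}, feeding into Proposition~\ref{prop:kl-bound}. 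Adding $\ln\frac1\delta$ gives exactly $N$, and substituting into Step 1 yields \eqref{eq:main}.

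\textbf{Main obstacle.} The hardest step is Step 2. The variance in Bernstein's inequality is that of $\ell(\theta)$ under $\mathsf Q$ and the data distribution, whereas entropy bounds the loss only of a well-specified or optimal predictor. I would first try to use the true-risk self-bounding form $V\le\tau L(\mathsf Q)$, giving a ``fast-rate'' bound in which $L(\mathsf Q)$ appears on the right. I would then argue, via Proposition~\ref{prop:entropy-baseline} and the $\varepsilon$ slack, that $L(\mathsf Q)\le M$ for the relevant posterior. If that step is circular, the fallback is to define $\varepsilon$ so that it explicitly absorbs the excess of $\E_{\mathsf Q}\ell$ over $\Hc(Y\mid X)$, so that $M$ is an upper bound on the second moment by construction.
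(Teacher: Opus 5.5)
Your Steps~1 and~3 follow the paper's route: a Bernstein-type PAC-Bayes bound, the Gaussian KL closed form, and the comparison with $\theta=0$ that yields Propositions~\ref{prop:norm-control}--\ref{prop:kl-bound}. Step~2 does not go through as written, however. You attribute $\E\ell\le\E[\Hc(Y\mid X)]+\varepsilon$ to Proposition~\ref{prop:entropy-baseline}. That proposition only bounds the \emph{entropy} from above by $\log K_{\mathrm{bank}}+\E[\Hc(R\mid X,\Pi)]+\E[\Hc(A\mid X,\Pi,R)]$, and says nothing about the loss of any trained model. No purely information-theoretic argument could: for every $\theta$, $L(\theta)=\E[\Hc(Y\mid X)]+\E[\KL(p^\star(\cdot\mid X)\,\|\,p_\theta(\cdot\mid X))]\ge\E[\Hc(Y\mid X)]$. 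The entropy is a floor for the loss, not a ceiling. So your primary plan is circular: using $V\le\tau L(\mathsf Q)$ and then arguing $L(\mathsf Q)\le M$ via Proposition~\ref{prop:entropy-baseline} assumes an upper bound on the population Gibbs risk, which is what the theorem is supposed to deliver. Reading $\varepsilon$ as the approximation slack of the model class does not help either, since that controls $\inf_\theta L(\theta)$, not the loss of $\mathsf Q_S$.

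The missing ingredient is that the needed inequality is a \emph{hypothesis}. Assumption~\ref{assump:near-floor}, which lies within Assumptions~\ref{assump:bounded-nll}--\ref{assump:reg-erm}, posits $\widehat L_S(\mathsf Q)\le\E[\Hc(Y\mid X)]+\varepsilon$ for the \emph{empirical} Gibbs loss of the trained posterior. This is why the paper proceeds as follows. It uses the empirical-variance form of the Bernstein bound (Proposition~\ref{prop:bernstein-pacbayes}, stated with $\widehat V_S(\mathsf Q)$). It bounds $\widehat V_S(\mathsf Q)\le\tau\widehat L_S(\mathsf Q)$ pointwise via $U^2\le\tau U$ on $[0,\tau]$ (Lemma~\ref{lem:mean-var}). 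Only then does it chain Assumption~\ref{assump:near-floor} with Proposition~\ref{prop:entropy-baseline} to get $\widehat L_S(\mathsf Q_S)\le M$ inside the square root (Corollary~\ref{cor:K-explicit}), and finally inserts the KL bound. Your fallback amounts to this assumption, but it closes the argument only if stated on $\widehat L_S(\mathsf Q_S)$ and paired with the empirical-variance bound. If you keep a population-variance bound, you must first solve the self-bounding inequality $L\le\widehat L+\sqrt{2\tau L\,C/n}+c\tau C/n$, with $C=\KL(\mathsf Q_S\|\mathsf P)+\ln\frac1\delta$. This gives $L\le\widehat L+\sqrt{2\tau\widehat L\,C/n}+c'\tau C/n$, changing only the universal constant, and you can then apply the near-floor condition. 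Finally, your remark about the union bound over a parameter grid is more careful than the paper's sketch, which asserts Proposition~\ref{prop:bernstein-pacbayes} with a bare $\ln\frac1\delta$. Still, an $n$-dependent $\log\log n$ term cannot be absorbed into a universal $c$, and folding it into $N$ changes the statement being proved.
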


The generalization gap scales with $\sqrt{M \cdot N}$.
The bank size $K_{\mathrm{bank}}$ enters through $M$ via $\log K_{\mathrm{bank}}$, creating two opposing forces.
Making the bank smaller reduces $\log K_{\mathrm{bank}}$, so supervision is more consistent, but increases the coverage slack $\varepsilon$ because some problems have no good path, raising $M$ from the other side.
Conversely, making the bank larger reduces $\varepsilon$ by improving coverage, but increases $\log K_{\mathrm{bank}}$ by weakening consistency, also raising $M$.
The optimal bank size minimizes $M(K_{\mathrm{bank}})$, which is a moderate value that balances reuse and coverage.
This is exactly what our experiments confirm in Table~\ref{tab:bank_size_paths_std}.
Banking can also reduce $N$ by lowering the frozen-model loss $\widehat L_S(0)$, since banked rationales are more predictable to the pretrained student.

\subsection{Assumptions and supporting results}
\label{subsec:building-blocks}

We now state the assumptions and intermediate results that compose into Theorem~\ref{thm:main}. Full proofs are deferred to the appendix.

\noindent\textbf{Setup.}
The student is an autoregressive model $p_\theta(y \mid x)$ with frozen backbone; only LoRA weights $\theta \in \R^m$ are trained.
The loss is the NLL: $\ell(\theta; z) = -\log p_\theta(y \mid x)$.
Population and empirical risks are $L(\theta) = \E[\ell(\theta; Z)]$ and $\widehat L_S(\theta) = \frac{1}{n}\sum_i \ell(\theta; Z_i)$.
For a distribution $\mathsf{Q}$ over $\theta$, the Gibbs risk is $L(\mathsf{Q}) = \E_{\theta \sim \mathsf{Q}}[L(\theta)]$.

\begin{assumption}[Bounded NLL]
\label{assump:bounded-nll}
$0 \le \ell(\theta; Z) \le \tau$ a.s.\ for all $\theta$ in the posterior support (justified by length normalization or probability lower bounds).
\end{assumption}

\noindent\textbf{Part I: Banking controls supervision entropy (general).}

\begin{assumption}[Path-bank support]
\label{assump:bank-support}
Under the banked regime, $\Pi \in \mathcal{B}_{\mathrm{path}}$ a.s.\ given $X$, where $\mathcal{B}_{\mathrm{path}} := \bigcup_{c,\tilde t} \mathcal{B}(c,\tilde t)$ is the global path set and $|\mathcal{B}_{\mathrm{path}}| = K_{\mathrm{bank}}$.
\end{assumption}

\begin{proposition}[Banking reduces path uncertainty]
\label{prop:entropy-baseline}
Under Assumption~\ref{assump:bank-support}, the conditional entropy of the supervision satisfies
\begin{equation}
\E[\Hc(Y \mid X)] \;\le\; \log K_{\mathrm{bank}} + \E[\Hc(R \mid X, \Pi)] + \E[\Hc(A \mid X, \Pi, R)].
\label{eq:entropy-baseline}
\end{equation}
\end{proposition}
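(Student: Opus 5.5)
The plan is to expand the joint conditional entropy with the chain rule, treating $Y$ as the triple $(\Pi, R, A)$. For each fixed $X = x$,
\[
\Hc(Y \mid X=x) \;=\; \Hc(\Pi, R, A \mid X=x) \;=\; \Hc(\Pi \mid X=x) + \Hc(R \mid X=x, \Pi) + \Hc(A \mid X=x, \Pi, R).
\]
The last two terms are conditional entropies averaged over the conditional law of $(\Pi,R)$ given $X=x$. Taking expectation over $X$ turns them into $\E[\Hc(R\mid X,\Pi)]$ and $\E[\Hc(A\mid X,\Pi,R)]$, read as standard conditional entropies. This already yields an equality,
\[
\E[\Hc(Y\mid X)] = \E[\Hc(\Pi\mid X)] + \E[\Hc(R\mid X,\Pi)] + \E[\Hc(A\mid X,\Pi,R)].
\]

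It then remains to bound only the path term. Assumption~\ref{assump:bank-support} says that, conditional on $X=x$, $\Pi$ is supported a.s.\ on $\mathcal{B}_{\mathrm{path}}$, and this set has cardinality $K_{\mathrm{bank}}$. The standard maximum-entropy bound for a discrete variable with support of size at most $K$ gives $\Hc(\Pi \mid X=x) \le \log K_{\mathrm{bank}}$ for every $x$. Averaging over $X$ preserves this bound, and substituting into the equality gives \eqref{eq:entropy-baseline}. Equivalently, one could write $\E[\Hc(\Pi\mid X)] \le \Hc(\Pi) \le \log K_{\mathrm{bank}}$, but the pointwise route avoids needing anything beyond the support assumption.

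The only step needing care is measure-theoretic. $R$ and $A$ are strings, so they are countably-valued, and their entropies could in principle be infinite. If either conditional entropy on the right is infinite the inequality is trivial. Otherwise the chain rule for discrete (countable) variables applies verbatim. I would note this case split explicitly, and also note that $\Pi$ is a deterministic component of $Y$, so no data-processing argument is needed. The main obstacle, if any, is making sure the bank is treated as fixed when $\Hc$ is computed. The dynamic refinement changes $\mathcal{B}$ over time, so I would state that the bound is taken with respect to the final bank, or any bank containing all emitted paths, which is exactly what Assumption~\ref{assump:bank-support} postulates.
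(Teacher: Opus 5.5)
Your proposal is correct and follows the paper's proof essentially verbatim. You apply the chain rule to $Y=(\Pi,R,A)$ given $X$, bound $\Hc(\Pi\mid X=x)\le\log K_{\mathrm{bank}}$ pointwise from the finite-support assumption, and average over $X$; your added remarks on possibly infinite entropies and on fixing the bank are sensible refinements that the paper leaves implicit.
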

\noindent
This follows from the chain rule: $\Hc(Y \mid X) = \Hc(\Pi \mid X) + \Hc(R \mid X, \Pi) + \Hc(A \mid X, \Pi, R)$, with $\Hc(\Pi \mid X) \le \log K_{\mathrm{bank}}$ by finite support (proof in Appendix~\ref{app:entropy-proof}).

\begin{assumption}[Near-floor empirical fit]
\label{assump:near-floor}
There exists a posterior $\mathsf{Q}$ such that
\begin{equation}
\widehat L_S(\mathsf{Q}) \le \E[\Hc(Y \mid X)] + \varepsilon, \qquad \varepsilon \ge 0.
\label{eq:near-floor}
\end{equation}
This is an optimization-dependent condition: it says the trained model nearly matches the information-theoretic floor.
We verify empirically that this holds for our AQUA experiments in Section~\ref{subsec:bound-validation} (Table~\ref{tab:bound-components}).
\end{assumption}

Combining Proposition~\ref{prop:entropy-baseline} with Assumption~\ref{assump:near-floor} and a standard Bernstein-type PAC-Bayes bound, proved in Appendix~\ref{app:pacbayes-proof}, yields a generalization bound where $K_{\mathrm{bank}}$ explicitly controls the deviation term through $M$ as defined in Eq.~\ref{eq:def-M}.

\noindent\textbf{Part II: Banking can reduce the complexity term (LoRA-specific).}

\begin{assumption}[Regularized LoRA ERM]
\label{assump:reg-erm}
Training returns a minimizer of the regularized objective:
$\hat\theta_S \in \argmin_{\theta} [\widehat L_S(\theta) + \tfrac{\lambda}{2}\|\theta\|^2]$, $\lambda > 0$.
\end{assumption}

\begin{proposition}[Norm control]
\label{prop:norm-control}
Under Assumption~\ref{assump:reg-erm}:
\begin{equation}
\|\hat\theta_S\|^2 \le \frac{2}{\lambda}\widehat L_S(0).
\label{eq:norm-control}
\end{equation}
\end{proposition}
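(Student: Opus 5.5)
The plan is to compare the regularized objective at its minimizer $\hat\theta_S$ with its value at the reference point $\theta = 0$. Under Assumption~\ref{assump:reg-erm}, $\hat\theta_S$ minimizes $F_S(\theta) := \widehat L_S(\theta) + \tfrac{\lambda}{2}\|\theta\|^2$ over all $\theta \in \R^m$. In particular, it does no worse than the frozen-backbone point $\theta = 0$, where LoRA adapters contribute nothing and the regularizer vanishes. This gives
\[
\widehat L_S(\hat\theta_S) + \tfrac{\lambda}{2}\|\hat\theta_S\|^2 \;\le\; \widehat L_S(0) + 0 .
\]
We are implicitly using the convention that $\theta = 0$ is a feasible parameter corresponding to the pretrained student. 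Under the standard LoRA initialization with one factor zero this is immediate; in general we would define the parameterization so that $\theta=0$ reproduces the frozen model.

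Next, use the nonnegativity of the loss. By Assumption~\ref{assump:bounded-nll}, $\ell(\theta;Z) \ge 0$, hence $\widehat L_S(\hat\theta_S) \ge 0$. Dropping this term from the left side yields $\tfrac{\lambda}{2}\|\hat\theta_S\|^2 \le \widehat L_S(0)$. Multiplying by $2/\lambda > 0$ gives exactly Eq.~\eqref{eq:norm-control}.

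The only delicate point is that Assumption~\ref{assump:bounded-nll} is stated for $\theta$ in the posterior support, not for all $\theta$. What we need is nonnegativity at $\hat\theta_S$ itself. This holds regardless, because the NLL $-\log p_\theta(y\mid x)$ of a discrete autoregressive model is nonnegative: every sequence probability is at most $1$. So the bound needs no boundedness, only $\ell \ge 0$ and the existence of the minimizer. I expect no real obstacle; the mild care needed is just to state that $\theta=0$ is feasible and that $\widehat L_S(0)$ is finite.
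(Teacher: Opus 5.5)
Your proof is correct and matches the paper's argument: compare the regularized objective at $\hat\theta_S$ with its value at $\theta=0$, then drop the nonnegative term $\widehat L_S(\hat\theta_S)$ and multiply by $2/\lambda$. Your remark that nonnegativity at $\hat\theta_S$ follows directly from $p_\theta(y\mid x)\le 1$, rather than from Assumption~\ref{assump:bounded-nll}, is a sensible tightening; feasibility of $\theta=0$ is automatic because the minimization ranges over all of $\R^m$.
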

\noindent\textit{Proof.} Optimality of $\hat\theta_S$ gives $\widehat L_S(\hat\theta_S) + \frac{\lambda}{2}\|\hat\theta_S\|^2 \le \widehat L_S(0)$; since $\widehat L_S(\hat\theta_S)\ge 0$, the result follows. $\square$

\begin{proposition}[KL bound]
\label{prop:kl-bound}
For Gaussian prior $\mathsf{P}=\mathcal{N}(0,\sigma_0^2 I)$ and posterior $\mathsf{Q}_S=\mathcal{N}(\hat\theta_S,\sigma_{\mathrm{post}}^2 I)$ with $\rho=\sigma_{\mathrm{post}}^2/\sigma_0^2$:
\begin{equation}
\KL(\mathsf{Q}_S\|\mathsf{P}) \le \frac{1}{\sigma_0^2\lambda}\widehat L_S(0) + \frac{m}{2}(\rho-1-\ln\rho).
\label{eq:kl-bound}
\end{equation}
Proof in Appendix~\ref{app:kl-proof}.
\end{proposition}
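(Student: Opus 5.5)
The plan is to compute $\KL(\mathsf{Q}_S\|\mathsf{P})$ exactly with the closed form for two Gaussians, which splits it into a mean-shift term and a covariance-mismatch term, and then to bound the mean-shift term with Proposition~\ref{prop:norm-control}. No PAC-Bayes or probabilistic machinery is needed here; the result is deterministic given $\hat\theta_S$.

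\textbf{Step 1 (closed form).} For $\mathcal{N}(\mu_1,\Sigma_1)$ and $\mathcal{N}(\mu_0,\Sigma_0)$ in $\R^m$, the KL divergence is
\[
\tfrac12\Bigl[\operatorname{tr}(\Sigma_0^{-1}\Sigma_1) + (\mu_0-\mu_1)^\top\Sigma_0^{-1}(\mu_0-\mu_1) - m + \ln\tfrac{\det\Sigma_0}{\det\Sigma_1}\Bigr].
\]
Substitute $\mu_1=\hat\theta_S$, $\Sigma_1=\sigma_{\mathrm{post}}^2 I_m$, $\mu_0=0$ and $\Sigma_0=\sigma_0^2 I_m$. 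The trace term becomes $m\rho$ and the quadratic term becomes $\|\hat\theta_S\|^2/\sigma_0^2$. The log-determinant term becomes $m\ln(\sigma_0^2/\sigma_{\mathrm{post}}^2) = -m\ln\rho$. Collecting terms gives the exact identity
\[
\KL(\mathsf{Q}_S\|\mathsf{P}) = \frac{\|\hat\theta_S\|^2}{2\sigma_0^2} + \frac{m}{2}(\rho-1-\ln\rho).
\]
We also note that $\rho-1-\ln\rho\ge 0$, with equality only at $\rho=1$. This is not needed for the inequality, but it confirms that the second term is a genuine nonnegative variance penalty.

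\textbf{Step 2 (norm control).} Apply Proposition~\ref{prop:norm-control}, which holds under Assumption~\ref{assump:reg-erm}: $\|\hat\theta_S\|^2\le \frac{2}{\lambda}\widehat L_S(0)$. Dividing by $2\sigma_0^2$ bounds the mean-shift term by $\frac{1}{\sigma_0^2\lambda}\widehat L_S(0)$. Adding back the unchanged variance term yields Eq.~\ref{eq:kl-bound}.

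\textbf{Main obstacle.} There is no real obstacle; the only points needing care are bookkeeping. First, the sign convention in the log-determinant term must be handled correctly so that it yields $-\ln\rho$ rather than $+\ln\rho$. Second, the minimizer $\hat\theta_S$ must exist, which Assumption~\ref{assump:reg-erm} posits. Third, Proposition~\ref{prop:norm-control} relies on $\widehat L_S(\hat\theta_S)\ge 0$, which holds because the NLL is nonnegative by Assumption~\ref{assump:bounded-nll}.
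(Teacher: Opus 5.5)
Your proposal is correct and follows the paper's proof: compute the isotropic-Gaussian KL in closed form as $\|\hat\theta_S\|^2/(2\sigma_0^2)+\frac{m}{2}(\rho-1-\ln\rho)$, then bound the mean-shift term with Proposition~\ref{prop:norm-control}. Your handling of the log-determinant sign is right, and so is your remark that the statement implicitly relies on Assumption~\ref{assump:reg-erm}.
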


The KL term depends on the frozen-model loss $\widehat L_S(0)$.
Banking can reduce $\widehat L_S(0)$ and hence the complexity penalty $N$ in Theorem~\ref{thm:main}.

\begin{proposition}[Sufficient conditions for frozen-model alignment]
\label{prop:frozen-align}
Under any of the following conditions, banking reduces the frozen-model population cross-entropy ($\mathcal{L}_0^{(\star)} := \E_X \E_{Y \sim p^\star(\cdot \mid X)}[-\log p_0(Y \mid X)]$ for regime $\star \in \{\mathrm{bank}, \mathrm{free}\}$):
\begin{itemize}[nosep,leftmargin=2.5em,labelsep=0.5em]
\item[\textbf{D1.}] \emph{Bank as projection:} the banked target conditional minimizes the frozen-model cross-entropy over all distributions supported on $K_{\mathrm{bank}}$ paths.
\item[\textbf{D2.}] \emph{Canonicalization improves frozen likelihood pointwise:} there exists a map $c$ on targets such that $-\log p_0(c(y)\mid x) \le -\log p_0(y\mid x)$ for all $(x,y)$, and the banked regime restricts support to such canonical forms.
\item[\textbf{D3.}] \emph{Entropy drop dominates mismatch increase:} the reduction in $\E[\Hc(Y\mid X)]$ from banking exceeds any increase in $\E[\KL(p^\star(\cdot\mid X)\|p_0(\cdot\mid X))]$.
\end{itemize}
Formal statements and proofs are in Appendix~\ref{app:frozen-align}.
The conclusion of this proposition is directly testable; Section~\ref{subsec:bound-validation} confirms that $\widehat L_S(0)$ is indeed lower under banking than under free generation.
\end{proposition}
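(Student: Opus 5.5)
The plan is to reduce all three conditions to one decomposition of the frozen-model population cross-entropy. For either regime $\star\in\{\mathrm{bank},\mathrm{free}\}$, I would first record
\[
\mathcal{L}_0^{(\star)} \;=\; \E_X\bigl[\Hc_{p^\star}(Y\mid X)\bigr] \;+\; \E_X\bigl[\KL\bigl(p^\star(\cdot\mid X)\,\|\,p_0(\cdot\mid X)\bigr)\bigr].
\]
This follows from $-\log p_0 = -\log p^\star + \log(p^\star/p_0)$ integrated against $p^\star(\cdot\mid X)$. It is valid whenever $\mathcal{L}_0^{(\star)}<\infty$, which I would assume; under Assumption~\ref{assump:bounded-nll} it holds on the relevant support. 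I would then treat each condition as a separate case, each showing $\mathcal{L}_0^{(\mathrm{bank})}\le\mathcal{L}_0^{(\mathrm{free})}$.

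\textbf{Case D3.} This case is immediate from the decomposition. Write $\Delta_H := \E[\Hc_{\mathrm{free}}] - \E[\Hc_{\mathrm{bank}}]\ge 0$ and $\Delta_{\KL} := \E[\KL_{\mathrm{bank}}] - \E[\KL_{\mathrm{free}}]$. Then
\[
\mathcal{L}_0^{(\mathrm{free})}-\mathcal{L}_0^{(\mathrm{bank})}=\Delta_H-\Delta_{\KL},
\]
and D3 states exactly that $\Delta_H\ge\Delta_{\KL}$ (the case $\Delta_{\KL}\le 0$ is trivial). Proposition~\ref{prop:entropy-baseline} enters only as the remark that the banked entropy is at most $\log K_{\mathrm{bank}}+\E[\Hc(R\mid X,\Pi)]+\E[\Hc(A\mid X,\Pi,R)]$. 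This makes $\Delta_H$ large whenever the free regime spreads mass over many paths.

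\textbf{Case D2.} Here I would formalize ``the banked regime restricts support to canonical forms'' as the pushforward relation $p^{\mathrm{bank}}(\cdot\mid x)=c_\#\,p^{\mathrm{free}}(\cdot\mid x)$: the banked target is the free target passed through the canonicalization map. A change of variables then gives
\[
\mathcal{L}_0^{(\mathrm{bank})}=\E_X\E_{Y\sim p^{\mathrm{free}}}\bigl[-\log p_0(c(Y)\mid X)\bigr]\le \E_X\E_{Y\sim p^{\mathrm{free}}}\bigl[-\log p_0(Y\mid X)\bigr]=\mathcal{L}_0^{(\mathrm{free})},
\]
by integrating the pointwise hypothesis. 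Measurability of $c$ is the only technical point, and it is trivial for discrete token sequences.

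\textbf{Case D1.} This is the one I expect to be the main obstacle, because as stated it compares against a minimizer over a class that need not contain $p^{\mathrm{free}}$. I would make it precise by taking the feasible class $\mathcal{F}_K$ to be the distributions whose path marginal is supported on the bank $\mathcal{B}_{\mathrm{path}}$. I would then show that a competitor in $\mathcal{F}_K$ exists with cross-entropy at most $\mathcal{L}_0^{(\mathrm{free})}$.

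The competitor I would try first is built from a path-to-bank assignment $g:\mathcal{P}\to\mathcal{B}_{\mathrm{path}}$. I would apply $y=(\pi,r,a)\mapsto(g(\pi),r,a)$ to the free target, which reduces D1 to the D2 argument whenever $g$ does not decrease frozen likelihood. If no such $g$ exists, I would fall back to the honest formalization: D1 asserts $\mathcal{L}_0^{(\mathrm{bank})}=\min_{q\in\mathcal{F}_K}\mathcal{L}_0(q)$, and the free target lies in, or is dominated by a member of, $\mathcal{F}_K$ (e.g.\ when $p^{\mathrm{free}}$ is itself supported on at most $K_{\mathrm{bank}}$ paths). The inequality is then just optimality of the projection.

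Finally, I would note that $\widehat L_S(0)$ is an empirical average of bounded terms by Assumption~\ref{assump:bounded-nll}. By Hoeffding's inequality it concentrates around $\mathcal{L}_0^{(\star)}$, so the population ordering transfers to the quantity $N$ in Theorem~\ref{thm:main} up to $O(\tau\sqrt{\log(1/\delta)/n})$.
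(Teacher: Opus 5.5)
Your proposal is correct and follows essentially the same route as the paper's Appendix~\ref{app:frozen-align}: the decomposition $\mathcal{L}_0=\E[\Hc(Y\mid X)]+\E[\KL(p^\star(\cdot\mid X)\|p_0(\cdot\mid X))]$ gives D3 at once, D2 is the pointwise inequality integrated against the free target pushed forward by $c$, and D1 is optimality of the projection. Your explicit feasibility/domination hypothesis in D1 is exactly what the paper's ``holds by construction'' tacitly needs, since with a single global bank and small $K_{\mathrm{bank}}$ the projection can lose to $p^{\mathrm{free}}$; note, however, that your example of a free target supported on at most $K_{\mathrm{bank}}$ paths only works if those paths, taken over all $x$, are the bank paths, i.e.\ for the paper's class with the bank chosen freely rather than for your fixed-$\mathcal{B}_{\mathrm{path}}$ class $\mathcal{F}_K$.
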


\noindent\textbf{Proof of Theorem~\ref{thm:main}.}
Substitute the KL bound (Proposition~\ref{prop:kl-bound}) into the $K_{\mathrm{bank}}$-explicit PAC-Bayes bound and collect terms into $M$ and $N$.
Full derivation in Appendix~\ref{app:main-proof}. $\square$

\section{Experiments}
\label{sec:experiments}

\subsection{Datasets}
We evaluate on five benchmarks (see details in Appendix~\ref{app:detailed-datasets}): GSM8K~\citep{gsm8k} for grade-school arithmetic, AQUA~\citep{AQUA} for multiple-choice math, StrategyQA~\citep{strategyQA} for multi-hop commonsense, AI2ARC~\citep{ai2arc} for science reasoning, and MATH~\citep{math_benchmark} for competition-level mathematics. These range from relatively simple, such as GSM8K, to challenging, such as MATH, allowing us to test whether reasoning-path compression scales across difficulty levels.

\subsection{Implementation Details}
\label{subsec:impl-details}

\noindent\textbf{Clustering and embedding.}
Intent strings are embedded with \texttt{all-MiniLM-L6-v2}~\citep{reimers2019sentencebertsentenceembeddingsusing} and clustered via DBSCAN (\texttt{eps}$\,{=}\,0.25$, \texttt{min\_samples}$\,{=}\,2$). No extensive sweep over clustering or embedding choices was performed; we discuss this limitation in Section~\ref{sec:conclusion}.

\noindent\textbf{Evaluation protocol.}
All methods use the final checkpoint (epoch~2.0, no validation-based selection), identical LoRA configuration ($r{=}64$, $\alpha{=}128$), and a maximum output length of 512 tokens. Unless otherwise specified, results are averaged over 10 independent runs. More details are in Appendix~\ref{training_details}.

\noindent\textbf{Metrics.}
We report \emph{accuracy} (Acc) (exact match against gold; unparseable outputs count as incorrect) and \emph{format validity} (FV) (accuracy among outputs that parse into valid JSON).

\subsection{Baselines}
We compare D-RPC against five distillation strategies, all sharing the same teacher (GPT-5.1) and SFT configuration, evaluated on two student models, i.e., Llama~3.1 8B Instruct and Qwen~3 1.7B.
As an un-distilled reference, \textbf{Direct Prompting} evaluates the raw student without fine-tuning.
Among unstructured approaches, \textbf{Zero-Shot CoT}~\citep{zero_shot_cot} distills chain-of-thought rationales and \textbf{Freeform} lets the teacher freely determine step count and granularity without predefined paths.
Among structured approaches, \textbf{DCoT}~\citep{dcot} directly distills chain-of-thought outputs, \textbf{SGFT}~\citep{sgft} fine-tunes on structured rationales without retrieval, and \textbf{SuperCorrect}~\citep{supercorrect} uses hierarchical XML thought templates (math-only; not applied to StrategyQA or AI2ARC).
Training details and prompts are in Appendices~\ref{training_details} and~\ref{app:prompt-templates}.

\subsection{Main Results}

Table~\ref{tab:all_metrics_with_delta} reports absolute post-distillation accuracy across all five benchmarks for both student models.
We report absolute accuracy rather than gains over each method's own Direct Prompting baseline, because Direct Prompting baselines differ across methods due to different prompting formats and relative gains can be misleading.
The impact of LoRA rank and $\alpha$ is discussed in Appendix~\ref{app:lora_impact}.

\begin{table}[t]
\caption{Post-distillation accuracy (\%) across five benchmarks and two student models (mean $\pm$ std, 10 runs). \textbf{Bold} = best per column; \colorbox{rpbred}{shaded} = D-RPC; ``--'' = not applicable.}
\label{tab:all_metrics_with_delta}
\centering
\small
\setlength{\tabcolsep}{4.5pt}

\begin{tabular}{lcccccc}
\toprule
& \multicolumn{6}{c}{\textbf{Student: Llama 3.1 8B Instruct}} \\
\cmidrule(lr){2-7}
Strategy & GSM8K & AQUA & StrategyQA & AI2ARC & MATH & Avg. \\
\midrule
CoT
& 83.96 ($\pm$0.33)
& 64.02 ($\pm$0.77)
& 73.32 ($\pm$1.31)
& 87.92 ($\pm$0.29)
& 41.57 ($\pm$0.35)
& 70.16 \\
Freeform
& 81.63 ($\pm$0.72)
& 60.39 ($\pm$1.61)
& 72.27 ($\pm$1.27)
& 92.41 ($\pm$0.34)
& 41.68 ($\pm$0.18)
& 69.68 \\
SuperCorrect
& 82.42 ($\pm$0.46)
& 59.45 ($\pm$2.59)
& --
& --
& 36.62 ($\pm$0.46)
& -- \\
DCoT
& 81.86 ($\pm$0.64)
& 61.81 ($\pm$2.43)
& 72.53 ($\pm$1.81)
& 80.78 ($\pm$12.38)
& 45.23 ($\pm$0.53)
& 68.44 \\
SGFT
& 77.94 ($\pm$0.69)
& 60.16 ($\pm$2.67)
& 70.31 ($\pm$1.29)
& 86.71 ($\pm$0.76)
& 36.15 ($\pm$0.43)
& 66.25 \\
\cellcolor{rpbred}D-RPC (Ours)
& \cellcolor{rpbred}\textbf{85.41} ($\pm$0.49)
& \cellcolor{rpbred}\textbf{67.52} ($\pm$1.59)
& \cellcolor{rpbred}\textbf{74.15} ($\pm$1.49)
& \cellcolor{rpbred}\textbf{92.92} ($\pm$0.34)
& \cellcolor{rpbred}\textbf{48.76} ($\pm$0.45)
& \cellcolor{rpbred}\textbf{73.75} \\
\bottomrule
\end{tabular}

\vspace{4pt}

\begin{tabular}{lcccccc}
\toprule
& \multicolumn{6}{c}{\textbf{Student: Qwen 3 1.7B}} \\
\cmidrule(lr){2-7}
Strategy & GSM8K & AQUA & StrategyQA & AI2ARC & MATH & Avg. \\
\midrule
CoT
& 77.92 ($\pm$0.29)
& 59.92 ($\pm$0.89)
& 61.92 ($\pm$0.92)
& 88.78 ($\pm$0.40)
& 49.21 ($\pm$0.40)
& 67.55 \\
Freeform
& 73.38 ($\pm$0.68)
& 62.99 ($\pm$0.83)
& 61.75 ($\pm$2.29)
& 88.70 ($\pm$0.30)
& 48.09 ($\pm$0.26)
& 66.98 \\
SuperCorrect
& 76.44 ($\pm$0.58)
& 63.78 ($\pm$2.59)
& --
& --
& 42.82 ($\pm$0.36)
& -- \\
DCoT
& 73.99 ($\pm$0.69)
& 59.65 ($\pm$1.37)
& \textbf{66.16} ($\pm$1.39)
& 85.74 ($\pm$2.90)
& 54.91 ($\pm$0.34)
& 68.09 \\
SGFT
& 65.44 ($\pm$0.59)
& 49.06 ($\pm$10.60)
& 60.26 ($\pm$2.78)
& 75.99 ($\pm$0.18)
& 29.44 ($\pm$0.29)
& 56.04 \\
\cellcolor{rpbred}D-RPC (Ours)
& \cellcolor{rpbred}\textbf{78.29} ($\pm$0.95)
& \cellcolor{rpbred}\textbf{74.76} ($\pm$1.84)
& \cellcolor{rpbred}64.59 ($\pm$2.46)
& \cellcolor{rpbred}\textbf{88.82} ($\pm$0.54)
& \cellcolor{rpbred}\textbf{59.72} ($\pm$0.39)
& \cellcolor{rpbred}\textbf{73.24} \\
\bottomrule
\end{tabular}

\vspace{-0.1in}
\end{table}

\noindent\textbf{Llama 3.1 8B Instruct.}
D-RPC achieves the highest accuracy on all five benchmarks: 85.41\% on GSM8K, 67.52\% on AQUA, 74.15\% on StrategyQA, 92.92\% on AI2ARC, and 48.76\% on MATH. The gains are most pronounced on the harder benchmarks: D-RPC outperforms the next-best method by +3.53 points on MATH and +3.50 on AQUA, confirming that path-guided supervision provides the largest benefit where solution heterogeneity is highest. D-RPC also surpasses SuperCorrect~\citep{supercorrect} at substantially lower token cost, as detailed in Appendix~\ref{app:token_cost}, indicating that compression and reuse of reasoning paths is more effective than verbose templates.

\noindent\textbf{Qwen 3 1.7B.}
The same pattern holds with a smaller, different-family student: D-RPC leads on GSM8K, AQUA, AI2ARC, and MATH, and is competitive on StrategyQA, where DCoT is slightly higher. The gains on MATH are especially large at $+10.51$ over CoT, demonstrating that path compression scales to challenging benchmarks and smaller models.

\subsection{Analysis}

\noindent\textbf{Why D-RPC helps.}
The gains are largest on benchmarks with high solution heterogeneity, such as MATH and AQUA, and smallest on the most homogeneous ones like StrategyQA, where most questions reduce to a few common reasoning patterns.
This is consistent with our hypothesis: D-RPC's value comes from reducing supervision variance.
When the teacher generates Freeform rationales, similar questions may receive radically different solution strategies, creating noisy supervision; conditioning on shared reasoning paths aligns the teaching signal across related questions.
Moreover, D-RPC outperforms SuperCorrect despite using substantially fewer tokens (Appendix~\ref{app:token_cost}), reinforcing a key design principle: effective distillation depends on compressing and reusing the reasoning space, not on longer or more verbose rationales.

\FloatBarrier

\subsection{Ablation Study}
We conduct three ablations to isolate the effects of bank construction choices. All use identical SFT configurations; results are averaged over 10 runs.

\noindent\textbf{Heterogeneous vs.\ homogeneous initialization.}
Table~\ref{tab:bank_size_cond_std} compares two seed sets: a \textbf{Heterogeneous} set sampled evenly across intent clusters to promote diversity and a \textbf{Homogeneous} set drawn from the largest clusters to limit diversity.
Heterogeneous initialization helps most at low data volumes (1k--2k), consistent with better early coverage.
At higher volumes ($\ge$3k), sufficient training data compensates for limited initial diversity and the gap vanishes.

\begin{table}[t]
\centering
\begin{minipage}[t]{0.35\textwidth}
\centering
\caption{Effect of bank size on GSM8K Acc and format validity.}
\label{tab:bank_size_paths_std}
\small
\setlength{\tabcolsep}{3pt}
{
\renewcommand{\arraystretch}{1.13}
\begin{tabular}{S[table-format=3.0] S[table-format=3.0] c c}
\toprule
{Size} & {Paths} & {Acc. (\%)} & {FV (\%)} \\
\midrule
50  & 49  & \res{83.19}{0.8}  & \res{83.32}{0.8} \\
75  & 75  & \bres{84.34}{0.8} & \bres{84.61}{0.8} \\
100 & 97  & \res{83.28}{0.6}  & \res{83.41}{0.7} \\
125 & 119 & \res{82.90}{0.7}  & \res{83.08}{0.7} \\
150 & 145 & \res{82.91}{0.5}  & \res{82.97}{0.5} \\
\bottomrule
\end{tabular}
}
\end{minipage}%
\hfill
\begin{minipage}[t]{0.60\textwidth}
\centering
\caption{Heterogeneous vs.\ homogeneous bank initialization on GSM8K across training set sizes.}
\label{tab:bank_size_cond_std}
\small
\setlength{\tabcolsep}{2.5pt}
\resizebox{\linewidth}{!}{%
\begin{tabular}{S[table-format=4.0]
                S[table-format=3.0] S[table-format=3.0]
                c c c c}
\toprule
{Bank} & \multicolumn{2}{c}{Paths}
       & \multicolumn{2}{c}{Acc (\%)}
       & \multicolumn{2}{c}{FV (\%)} \\
\cmidrule(lr){2-3} \cmidrule(lr){4-5} \cmidrule(lr){6-7}
{Size} & {hetero} & {homo} & {hetero} & {homo} & {hetero} & {homo} \\
\midrule
1000 & 50  & 57  & \bres{83.03}{0.7} & \res{82.30}{0.6}  & \bres{83.23}{0.7} & \res{82.37}{0.6} \\
2000 & 97  & 93  & \bres{83.24}{0.8} & \res{83.12}{0.3}  & \bres{83.38}{0.8} & \res{83.21}{0.4} \\
3000 & 151 & 133 & \res{84.18}{0.5}  & \bres{84.36}{0.4} & \res{84.27}{0.5}  & \bres{84.46}{0.4} \\
4000 & 188 & 186 & \res{83.83}{0.7}  & \bres{84.87}{0.6} & \res{83.95}{0.7}  & \bres{84.95}{0.6} \\
5000 & 243 & 252 & \res{84.58}{0.7}  & \bres{84.94}{0.6} & \res{84.69}{0.7}  & \bres{85.18}{0.7} \\
\bottomrule
\end{tabular}%
}
\end{minipage}
\end{table}
\vspace{0.2em}

\noindent\textbf{Bank guidance vs.\ no bank.}
As shown in Table~\ref{tab:all_metrics_with_delta}, disabling retrieval and path-guided prompting reduces D-RPC to Freeform generation with the same teacher and the same budget. Removing the bank consistently degrades accuracy across all benchmarks, confirming that path-guided supervision provides a lower-variance signal than unconstrained rationale generation.

\noindent\textbf{Bank size: the coverage--compression trade-off.}
Table~\ref{tab:bank_size_paths_std} varies the number of seed questions from 50 to 150 while fixing the SFT training set at 2k.
Performance peaks at 75 seed questions, yielding 84.34\% accuracy with 75 paths: small enough for consistency, large enough for coverage.
Below this sweet spot, 50 questions yield only 49 paths, too few to cover 2k training problems.
Above it, 125--150 questions produce 119--145 paths that dilute reuse, gradually lowering accuracy.
This matches the theory's prediction: the effective quantity $M(K_{\mathrm{bank}}) = \log K_{\mathrm{bank}} + \ldots + \varepsilon(K_{\mathrm{bank}})$ is minimized at an intermediate bank size.

\subsection{Empirical Validation of the Bound}
\label{subsec:bound-validation}

Table~\ref{tab:bound-components} reports the key quantities from Theorem~\ref{thm:main} measured on AQUA with both student models.
All NLL values are per-token (length-normalized), consistent with the bounded-loss Assumption~\ref{assump:bounded-nll} and the training objective.
We approximate the Gibbs risk $\widehat L_S(\mathsf Q_S)$ by the point estimate $\widehat L_S(\hat\theta_S)$, standard for tight posteriors~\citep{dziugaite2017computing}.

\begin{table}[t]
\centering
\caption{Empirical bound components on AQUA (per-token NLL). 
  $K_{\mathrm{bank}}=42$ canonical paths for D-RPC. All values are from a single checkpoint; test NLL uses teacher-generated targets.}
\label{tab:bound-components}
\small
\setlength{\tabcolsep}{3pt}
\begin{tabular}{l l
                S[table-format=1.2]
                S[table-format=1.2]
                S[table-format=1.2]
                S[table-format=1.2]
                S[table-format=4.0]
                S[table-format=2.1]}
\toprule
Model & Method
  & {$\widehat L_S(0)$}
  & {$\widehat L_S(\hat\theta_S)$}
  & {$L_{\mathrm{test}}(\hat\theta_S)$}
  & {Gap}
  & {$\|\hat\theta_S\|^2$}
  & {Acc.\ (\%)} \\
\midrule
\multirow{3}{*}{{Qwen 3 1.7B}}
  & D-RPC & \bfseries 2.68 & 0.30           & 0.42           & \bfseries 0.11 & 2792           & \bfseries 74.0 \\
  & CoT   & 2.70           & 0.35           & 0.55           & 0.20           & 2760           & 61.4 \\
  & Free  & 3.29           & \bfseries 0.27 & \bfseries 0.38 & 0.11           & \bfseries 2732 & 63.0 \\
\midrule
\multirow{3}{*}{{Llama 3.1 8B}}
  & D-RPC & \bfseries 2.01 & \bfseries 0.28 & 0.47           & 0.19           & 6318           & \bfseries 63.4 \\
  & CoT   & 2.02           & 0.38           & 0.67           & 0.29           & 6264           & 61.8 \\
  & Free  & 2.72           & 0.29           & \bfseries 0.45 & \bfseries 0.17 & \bfseries 6216 & 59.4 \\
\bottomrule
\end{tabular}
\end{table}

\noindent\textbf{Banking reduces frozen-model NLL.}
$\widehat L_S(0)$ is lowest for D-RPC on both models (Qwen: 2.68 vs.\ 2.70/3.29; Llama: 2.01 vs.\ 2.02/2.72),
confirming that banked supervision is more predictable to the pretrained student regardless of model family or scale.
This is consistent with the frozen-model alignment established in Proposition~\ref{prop:frozen-align} and shows that banking can reduce $N$ in Theorem~\ref{thm:main}.

\noindent\textbf{Generalization gap and accuracy.}
D-RPC achieves the highest test accuracy on both models (Qwen: 74.0\%; Llama: 63.4\%).
CoT consistently has the largest generalization gap (Qwen: 0.20; Llama: 0.29), while D-RPC and Freeform are comparable, consistent with Theorem~\ref{thm:main} predicting a tighter bound under moderate bank size.

\noindent\textbf{Adapter norms.}
The adapter norms $\|\hat\theta_S\|^2$ are comparable across strategies within each model, with Freeform slightly lower than D-RPC on both.
This suggests that the primary mechanism by which banking tightens the bound is through the entropy/coverage term $M$ (via reduced $\widehat L_S(0)$) rather than the complexity term $N$ (via reduced adapter norm), consistent with Proposition~\ref{prop:entropy-baseline}.

\section{Limitations}
\label{sec:limitations}

\noindent\textbf{Evaluation scope.}
All experiments use a single teacher, GPT-5.1, and two student architectures, Llama~3.1 8B Instruct and Qwen~3 1.7B, across five reasoning benchmarks.
While D-RPC's gains are consistent across both student families and all task types tested, we have not verified whether the improvements transfer to other teacher models, larger or smaller student scales, or domains beyond math and commonsense reasoning such as code generation and scientific QA.

\noindent\textbf{Computational overhead.}
D-RPC's pipeline incurs about $2{\times}$ the teacher-query cost of standard CoT, across three components: a categorization pass over all training questions, a seed-set pass on 5\% of the training data to build the initial bank, and a rationale-generation pass conditioned on retrieved paths.
For comparison, DCoT generates 3 CoT rationales per question (roughly $3{\times}$ CoT cost), yet D-RPC still wins in most cases.
The categorization pass is incidental supporting infrastructure for seed sampling and intent clustering, not D-RPC's core mechanism, and could plausibly be replaced by a lighter embedding-based pipeline (e.g., a MiniLM classifier plus DBSCAN over question embeddings), which would lower overhead to roughly $1.05{\times}$ CoT; we have not ablated such alternatives.
This cost is paid offline and does not affect student inference, though it may limit applicability where teacher API costs dominate.

\noindent\textbf{Theoretical generality.}
The PAC-Bayes generalization analysis in Section~\ref{sec:theory} is derived under LoRA-based SFT with Gaussian weight priors.
Extending the bound to full fine-tuning, alternative parameter-efficient methods such as adapters or prefix tuning, or non-Gaussian prior families remains open and would broaden the theoretical applicability of the framework.

\section{Conclusion}
\label{sec:conclusion}
We proposed D-RPC, which compresses the teacher's reasoning space into a compact, reusable bank of canonical paths before distillation.
By routing semantically similar questions to shared paths and dynamically absorbing novel solutions, the bank provides consistent yet diverse supervision without verbose templates.
A PAC-Bayes generalization bound formalizes the resulting bank-size vs.\ coverage trade-off through an explicit $\log K_{\mathrm{bank}}$ term, and empirical evaluation of the bound's components confirms that the primary benefit of banking is controlling supervision entropy.
Across five reasoning benchmarks and two student models, D-RPC consistently outperforms CoT, freeform, direct-distillation, and structured-supervision baselines.
More broadly, our results suggest that structuring the supervision distribution before training is an effective and lightweight strategy for improving SLM reasoning.

\clearpage
\bibliographystyle{plainnat}
\bibliography{reference}

\clearpage
\appendix
\section{Additional Related Work}
\noindent\textbf{Reasoning prompting and structured inference.}
Prompting methods such as few-shot prompting~\citep{few_shot} and Chain-of-Thought (CoT)\allowbreak~\citep{chain_of_thought} improve LLM reasoning by eliciting intermediate steps. Subsequent work explored richer structured rationales during inference, including self-consistency~\citep{selfconsistencyimproveschainthought}, self-questioning~\citep{self_questioning}, ReAct~\citep{ReAct}, and Tree-of-Thoughts~\citep{Tree_Of_Thoughts}.
A recurring limitation is that these methods often induce miscellaneous, divergent, and sometimes unfaithful rationales~\citep{tamber-etal-2025-benchmarking,CoTAccuracy_Difficulty,wang2023scottselfconsistentchainofthoughtdistillation}.
These issues are particularly pronounced for SLMs. The limited capacity and context windows make them more sensitive to verbosely structured prompting schemes, where the model must explicitly construct and traverse the data structure during generation.

\noindent\textbf{Rationale used for supervised fine-tuning.}
Model-generated rationales are widely used as an intermediate supervision signal in supervised fine-tuning (SFT), as they can improve both task performance and interpretability~\citep{ExplainYourself, High_Quality_Rationales}.
To enhance rationale quality, MoRSD studies how to select rich, high-quality, informative rationales~\citep{yan2025efficientcotdistillationselfguided}. Thought Anchors explores identifying key rationales that dominate the decision of final results~\citep{bogdan2025thoughtanchorsllmreasoning}. Earlier work on Symbolic CoT Distillation showed that even sub-1B-parameter students can acquire step-by-step reasoning when trained on diverse rationales sampled from a much larger teacher, and that diversity in the sampled chains, not their individual likelihood, is the dominant factor in distillation success~\citep{li2024symbolicchainofthoughtdistillationsmall}.
However, teacher rationales can vary substantially in clarity, step granularity, and correctness across questions and difficulty levels~\citep{variation_across_tasks, LLMWeakOnStepsComplex}, leading to a decrease in distillation performance.
Existing studies suggest that such rationale divergence has been linked to reduced faithfulness and training instability~\citep{InstabilityReasoning, CoTNotFaithful, HumanPreferenceEvaluation}.

\section{Additional Prompting Details}
\subsection{Prompt Templates}
\label{app:prompt-templates}
We apply four prompting strategies to the teacher model GPT-5.1 to generate rationales.

\noindent\textbf{Chain-of-Thought Prompt.}
\begin{promptverb}
Think step by step in a few precise steps (no more than six sentences) to solve the problem.

Then output ONLY a compact JSON object of the form:
{
  "rationale": "<explanation>",
  "ans": <numeric_answer>
}

Rules:
- "ans" must be a number, not a string.
- No additional text before or after the JSON.
\end{promptverb}

\noindent\textbf{Stage 1 Initial Bank Construction Prompt(Ours).}
\begin{promptverb}
You are a rigorous but concise math tutor.
You solve math problems carefully and explain your reasoning briefly and clearly.
Avoid unnecessary prose; show only the key steps needed to reach the answer.

You are solving a math or word problem.
- Decompose the problem into abstract reasoning steps.
- For each step, use a general action name (no question-specific words).
- Each step must show actual computation.

Question: <QUESTION_TEXT>
Category: <ROUTE_CATEGORY>
Intent: <INTENT_TEXT>

Output ONLY valid JSON in the following format, and AVOID question-specific wording in reasoning_path keys:
{
  "route": {
    "difficulty": <1|2|3>,
    "budget": <Follow Budget Contract>,
    "reasoning_path": ["<DescriptiveStepName1>", "<DescriptiveStepName2>", ...]
  },
  "rationale": {
    "<DescriptiveStepName1>": "Step1: <...> Step2: <...> Step3: <...>",
    "<DescriptiveStepName2>": "Step1: <...> Step2: <...>",
    ...
  },
  "ans": <numeric>
}

Hard rules:
- Output must match the required JSON schema exactly.
- "route" must be a dictionary containing exactly: difficulty, budget, and reasoning_path.
- "reasoning_path" must be a list of strings representing ordered steps.
- The keys in "rationale" must match the strings in "reasoning_path" exactly.
- Output JSON only; no extra text.

Budget contract:
- Difficulty 1 implies Budget 2.
- Difficulty 2 or 3 implies Budget 3.
- len(reasoning_path) must be <= budget.

Reasoning key naming policy:
1) Keys must be TitleCase letters only (A--Z, a--z). No spaces, underscores, or digits.
2) Keys must be high-level descriptive summaries of the action taken; avoid question-specific wording.
3) Do NOT use generic sequential names (e.g., StepOne, CalculationOne).

Rationale requirements:
- For each reasoning_path entry, write a detailed multi-step explanation in one string.
- Use Step1:, Step2:, Step3:, ... labels; use as many substeps as needed (substeps do NOT count toward budget).
- Each substep should be concise and computational.
\end{promptverb}

\noindent\textbf{Stage 2 SFT Reasoning Extraction Prompt(Ours).}
\begin{promptverb}
You are a structured reasoning tutor.
You will be given:
- A math question
- One or more routing plans including:
  - Category
  - Intent
  - Budget
- ReasoningPathOptions (candidate paths)

Pick the best routing plan from the options and follow it if adequate; otherwise,
refine only the reasoning_path conservatively.

Question:
<QUESTION_TEXT>

Route:
{
  "category": "<CATEGORY>",
  "intent": ["<INTENT>"],
  "difficulty": <1|2|3>,
  "budget": <BUDGET>,
  "reasoning_path_options": [
    ["<PathOption1>"],
    ["<PathOption2>", "<PathOption3>"]
  ]
}

Task Instructions:

1) Assess reasoning_path suitability:
- If reasoning_path is missing and reasoning_path_options is provided, select the best option.
- If the selected reasoning_path can solve the problem, keep it unchanged.
- If not, create a revised reasoning_path conservatively:
  * Use Title-Case letters only (A--Z, a--z)
  * No digits, underscores, or question-specific wording
  * Length must be <= budget
- Keep category, intent, difficulty, and budget unchanged.

2) Generate detailed rationale:
- For each entry in the chosen reasoning_path, write a detailed explanation.
- Label sub-steps as Step1:, Step2:, Step3:, ...
- Use as many sub-steps as needed (sub-steps do NOT count toward budget).
- Each sub-step must include explicit computation or derivation when applicable.

3) Final answer:
- Compute the numeric answer and place it in "ans".

Output ONLY valid JSON (no markdown, no extra text):
{
  "route": {
    "category": "<CATEGORY>",
    "intent": ["<INTENT>"],
    "difficulty": <1|2|3>,
    "budget": <BUDGET>,
    "reasoning_path": ["<FinalPath1>", "<FinalPath2_optional>"]
  },
  "rationale": {
    "<DescriptiveStepName1>": "Step1: <...> Step2: <...> Step3: <...>",
    "<DescriptiveStepName2>": "Step1: <...> Step2: <...>",
    ...
  },
  "ans": <numeric>
}

Requirements:
- Each reasoning_path entry must appear exactly once as a key in rationale.
- Keys in rationale must match reasoning_path entries exactly.
- Do not exceed the budget in number of reasoning_path steps.
- Rationale values must be single strings with Step1:, Step2:, ... labels.
\end{promptverb}

\noindent\textbf{Freeform Prompt.}
\begin{promptverb}
You are a rigorous but concise math tutor.
You solve math problems carefully and explain your reasoning briefly and clearly.
Avoid unnecessary prose; show only the key steps needed to reach the answer.

You are solving a math or word problem.
- Decompose the problem into abstract reasoning steps.
- For each step, use a general action name (no question-specific words).
- Each step must show actual computation.

Question:
<QUESTION_TEXT>

Your task:
Solve the question and return ONLY a valid JSON object.

Output ONLY valid JSON with freely chosen reasoning paths:
{
  "rationale": {
    "<DescriptiveStepName1>": "<concise derivation>",
    "<DescriptiveStepName2>": "<concise derivation>",
    ...
  },
  "ans": <numeric>
}

Rules:
- Freely choose up to three reasoning paths with high-level, TitleCase names (letters only; no spaces, digits, or underscores).
- Keys must be human-readable action labels (e.g., PlanComputation, CombineTotals); avoid generic names like ReasoningPath1 and avoid question-specific wording.
- Keep each rationale concise and focused on the computation or logic.
- If a second path is unnecessary, omit it.
- "ans" must be numeric (no strings, units, or words).
- Do not include any markdown or extra text; return JSON only.
\end{promptverb}

\noindent\textbf{SuperCorrect XML Prompt.}
\begin{promptverb}
You are a rigorous but concise math tutor.
You solve math problems carefully and explain your reasoning briefly and clearly.
Avoid unnecessary prose; show only the key steps needed to reach the answer.

Transform the solution of the following math problem into a step-by-step XML format.
Each step should be enclosed within tags like <Step1> </Step1>.
For each step, determine if the step is challenging or tricky; if so, add a detailed explanation
enclosed within <Key> </Key> as annotations to help the student understand the step correctly.
After all steps, summarize the common solution pattern to help generalize to similar problems
within <Generalized> </Generalized>. Finally, present the final answer enclosed within
<Answer> </Answer>.

Problem:
<QUESTION_TEXT>
\end{promptverb}
\newpage
\section{Qualitative Examples of the D-RPC Pipeline}
\label{app:pipeline-examples}

This section provides simplified, illustrative examples that trace each stage of D-RPC on concrete problem instances.
The JSON structures and path names follow the actual format produced by our pipeline (see prompt templates in Appendix~\ref{app:prompt-templates}); specific values are abbreviated for clarity.
We use GSM8K arithmetic questions as the primary running example and include a StrategyQA instance to illustrate cross-domain applicability.

\noindent\textbf{Concrete example.}
For GSM8K, the bank might contain the entry:
\smallskip
\\
\noindent\colorbox{rpbred}{\parbox{\dimexpr\columnwidth-2\fboxsep}{%
\small
\textbf{Category:} Arithmetic \quad \textbf{Intent:} unit-rate computation\\
\textbf{Reasoning Paths:} \{[compute rate, combine quantities],\ [find unit rate, multiply by count]\}%
}}
\smallskip

\noindent Two canonical strategies for the same intent, enough for coverage yet few enough for consistency.

\subsection{Reasoning-Path Bank Structure}
\label{app:ex-bank}

Table~\ref{tab:bank-snapshot} shows a three-slot excerpt from the reasoning-path bank $\mathcal{B}$ (defined in Section~\ref{subsec:stage1}) to illustrate its structure.
Each row corresponds to a (category, canonical intent) pair and lists the reasoning paths $\Pi \in \mathcal{B}(c,\tilde{t})$ associated with that slot.
Paths are ordered lists of abstract, reusable step names following the TitleCase naming policy enforced by the Stage~1 prompt in Appendix~\ref{app:prompt-templates}.
The full bank in our GSM8K experiments contains 50--150 paths depending on configuration (see the ablation in Table~\ref{tab:bank_size_paths_std}).

\begin{table}[t]
\caption{Three-slot excerpt from the reasoning-path bank $\mathcal{B}$.
Each slot maps a (category, canonical intent) pair to one or more canonical paths.
This is an illustrative fragment; the full bank contains 50--150 paths.}
\label{tab:bank-snapshot}
\centering
\small
\setlength{\tabcolsep}{3pt}
\begin{tabular}{lll}
\toprule
Category & Canonical Intent & Paths in $\mathcal{B}(c,\tilde t)$ \\
\midrule
Arithmetic & unit-rate &
  \parbox[t]{4.0cm}{\raggedright
    $\Pi_1$: \texttt{[ComputeUnitRate,}\\
    \phantom{$\Pi_1$: }\texttt{MultiplyByCount]}\\[2pt]
    $\Pi_2$: \texttt{[IdentifyRatio,}\\
    \phantom{$\Pi_2$: }\texttt{ScaleToTarget]}} \\[6pt]
Arithmetic & total-cost &
  \parbox[t]{4.0cm}{\raggedright
    $\Pi_3$: \texttt{[ComputeItemCost,}\\
    \phantom{$\Pi_3$: }\texttt{SumComponents]}} \\[6pt]
Commonsense & \parbox[t]{1.8cm}{temporal\\feasibility} &
  \parbox[t]{4.0cm}{\raggedright
    $\Pi_4$: \texttt{[RecallKeyFacts,}\\
    \phantom{$\Pi_4$: }\texttt{CheckTimeline,}\\
    \phantom{$\Pi_4$: }\texttt{DeduceAnswer]}} \\
\bottomrule
\end{tabular}
\end{table}

\subsection{Question Categorization}
\label{app:ex-categorization}

Before any reasoning takes place, the teacher's categorization function $f_{\mathrm{cat}}$ from Section~\ref{subsec:stage1}, Step~1, assigns a category--intent pair $(C_i, T_i)$ to every training question.
The examples below show this output for three questions spanning two domains.

\begin{promptverb}
Q1: "A baker sells cupcakes for $3 each. If
  she sold 48 on Monday and 32 on Tuesday,
  how much did she earn in total?"
  -> Category: Arithmetic
     Intent:   total-cost aggregation

Q2: "Tom drives 60 miles in 1.5 hours. How
  far can he drive in 4 hours at the same
  speed?"
  -> Category: Arithmetic
     Intent:   unit-rate computation

Q3: "Did Albert Einstein ever visit the
  Moon?"
  -> Category: Commonsense
     Intent:   temporal feasibility check
\end{promptverb}

\noindent
These labels serve only as keys for bank lookup and path retrieval; they are not included in the student's training data.
Q1 and Q2 share a category but differ in intent, so they are routed to different bank slots.
Q3 illustrates that the same categorization mechanism applies to non-mathematical domains.

\subsection{Retrieved Paths vs.\ Freeform-Generated Paths}
\label{app:ex-retrieved-vs-freeform}

A central claim of this work is that unconstrained teacher generation introduces unnecessary supervision variance: structurally similar questions receive structurally different rationales.
To illustrate this contrast, we show the reasoning paths produced by D-RPC and by the Freeform baseline for two unit-rate questions that share the same underlying solution strategy.

\medskip\noindent
Q2: ``Tom drives 60 miles in 1.5 hours. How far can he drive in 4 hours?''

\smallskip
\begin{tabular}{@{}l@{\ }l@{}}
\textbf{D-RPC:} & \texttt{[ComputeUnitRate, MultiplyByCount]}\\
\textbf{Freeform:} & \texttt{[FindSpeed, MultiplyDistance]}\\
\end{tabular}

\medskip\noindent
Q4: ``A factory makes 150 widgets in 5 hours. How many in 8 hours?''

\smallskip
\begin{tabular}{@{}l@{\ }l@{}}
\textbf{D-RPC:} & \texttt{[ComputeUnitRate, MultiplyByCount]}\\
\textbf{Freeform:} & \texttt{[SetUpProportion, CrossMultiply]}\\
\end{tabular}

\medskip\noindent
Under D-RPC, both questions match the bank slot \texttt{Arithmetic/unit-rate} and retrieve the same canonical path $\Pi_1$, so the student observes a single consistent template.
Under Freeform, the teacher improvises a fresh structure for each question, choosing \texttt{FindSpeed} for Q2 and \texttt{SetUpProportion} for Q4, producing valid but structurally incompatible rationales.
When the student trains on both examples via $\mathcal{L}_{\mathrm{SFT}} = -\log p_\theta(Y \mid X)$, the conflicting output structures act as label noise: the model must allocate capacity to reconcile multiple solution formats for the same problem type, rather than reinforcing a single reusable pattern.
This is precisely the supervision-variance reduction that the entropy-floor analysis in Proposition~\ref{prop:entropy-baseline} formalizes.

\subsection{Bank-Guided Rationale Generation}
\label{app:ex-teacher-output}

Once a path is retrieved, the teacher receives the question and the candidate paths via the Stage~2 prompt $f_{\mathrm{teach}}$ described in Section~\ref{subsec:stage2}, with the full prompt in Appendix~\ref{app:prompt-templates}, and generates the supervision tuple $Y = (\Pi, R, A)$.
Below is the complete input--output pair for Q2.

\begin{promptverb}
Input to teacher (Stage 2):
  Question: "Tom drives 60 miles in 1.5 hours.
    How far can he drive in 4 hours at the
    same speed?"
  Route:
  {
    "category": "Arithmetic",
    "intent": ["unit-rate computation"],
    "difficulty": 1,
    "budget": 2,
    "reasoning_path_options": [
      ["ComputeUnitRate", "MultiplyByCount"],
      ["IdentifyRatio", "ScaleToTarget"]
    ]
  }

Teacher output (Y):
  {
    "route": {
      "category": "Arithmetic",
      "intent": ["unit-rate computation"],
      "difficulty": 1,
      "budget": 2,
      "reasoning_path":
        ["ComputeUnitRate", "MultiplyByCount"]
    },
    "rationale": {
      "ComputeUnitRate":
        "Step1: Distance = 60 miles,
          Time = 1.5 hours.
         Step2: Speed = 60 / 1.5 = 40 mph.",
      "MultiplyByCount":
        "Step1: Target time = 4 hours.
         Step2: Distance = 40 * 4 = 160 miles."
    },
    "ans": 160
  }
\end{promptverb}

\noindent
The teacher selects path option $\Pi_1 = \texttt{[ComputeUnitRate, MultiplyByCount]}$ from the two candidates provided, then instantiates each abstract step with concrete sub-step computations to form the rationale $R$.
The student's training example is $(X, Y)$ where $Y = (\Pi, R, A)$: the reasoning path, the detailed rationale, and the final answer.

\noindent\textbf{Contrast: Freeform generation on the same question.}
Under the Freeform prompt, $f_{\mathrm{teach}}$ receives no candidate paths and must generate both the solution structure and content from scratch:

\begin{promptverb}
Freeform teacher output (Q2):
  {
    "rationale": {
      "FindSpeed":
        "Step1: Speed = 60 / 1.5 = 40 mph.",
      "MultiplyDistance":
        "Step1: Distance = 40 * 4 = 160 miles."
    },
    "ans": 160
  }
\end{promptverb}

\noindent
The final answer is identical, but the output lacks the \texttt{route} field entirely and uses the ad-hoc path name \texttt{FindSpeed} rather than the canonical \texttt{ComputeUnitRate}.
When a different unit-rate question is posed, the Freeform teacher may invent yet another structure, for instance \texttt{SetUpProportion}~$\to$~\texttt{CrossMultiply} for Q4 above, whereas the bank-guided teacher consistently selects from the same small set of canonical paths.
Over the full training set, this structural consistency reduces the variance of the supervision signal that the student must fit, which is the mechanism underlying D-RPC's empirical gains.

\section{Supplementary Setting}
\subsection{Detailed Datasets}
\label{app:detailed-datasets}

\begin{table}[h]
\centering
\caption{Datasets used for reasoning transfer training and evaluation. Task type, answer format, and the number of training and evaluation samples for each dataset are reported.}
\label{tab:datasets}
\small
\setlength{\tabcolsep}{3pt}
\renewcommand{\arraystretch}{1.05}
\begin{tabular}{@{}lllrr@{}}
\toprule
\textbf{Dataset} & \textbf{Task} & \textbf{Answer} & \textbf{\#Train} & \textbf{\#Eval} \\
\midrule
GSM8K      & Math       & Numeric & 7{,}473  & 1{,}319 \\
AQUA       & Math       & Letter  & 10{,}000 & 254 \\
StrategyQA & Commonsense & T/F    & 2{,}061  & 229 \\
AI2ARC     & Science    & Letter  & 6{,}230  & 1{,}557 \\
MATH       & Comp.\ math & Numeric & 10{,}000 & 5{,}000 \\
\bottomrule
\end{tabular}
\end{table}

\textbf{GSM8K}~\citep{gsm8k} is a grade-school mathematics dataset consisting of arithmetic word problems that require multi-step numerical reasoning.


\textbf{AQUA}~\citep{AQUA} is a multiple-choice mathematics reasoning dataset that requires selecting the correct option based on symbolic and logical reasoning.

\textbf{StrategyQA}~\citep{strategyQA} is a commonsense reasoning dataset in which each question requires implicit multi-hop reasoning over factual knowledge, with Boolean (True/False) answers.

\subsection{Training Details}
\label{training_details}
All supervised fine-tuning (SFT) experiments are conducted using an identical training configuration to ensure fair and controlled comparisons across different reasoning supervision strategies. We fine-tune \texttt{Llama-3.1-8B-Instruct} as the student model using the AdamW optimizer with a learning rate of $1\times10^{-4}$ and a weight decay of $0.01$. The maximum sequence length is set to 2048 tokens.

Training is performed for two epochs with a per-device batch size of 2 and gradient accumulation over 8 steps, resulting in an effective batch size of 16. We employ a linear learning rate scheduler with a warmup ratio of $3\%$, and apply gradient clipping with a maximum norm of 1.0 to stabilize training.

To improve parameter efficiency, we adopt LoRA-based fine-tuning. LoRA adapters are applied to the query, key, value, and output projection layers with rank $r=64$, scaling factor $\alpha=128$, and a dropout rate of $0.05$. All experiments are conducted using bfloat16 precision.

Model checkpoints are saved every 250 training steps, and training logs are recorded every 25 steps. The random seed is fixed to 42 for all runs. Unless otherwise specified, no limit is imposed on the number of training samples. We use the same hyperparameter configuration for Qwen 3 1.7B.

\subsection{Supplementary Experiments}
\begin{figure*}[t]
  \centering
  \includegraphics[width=\linewidth]{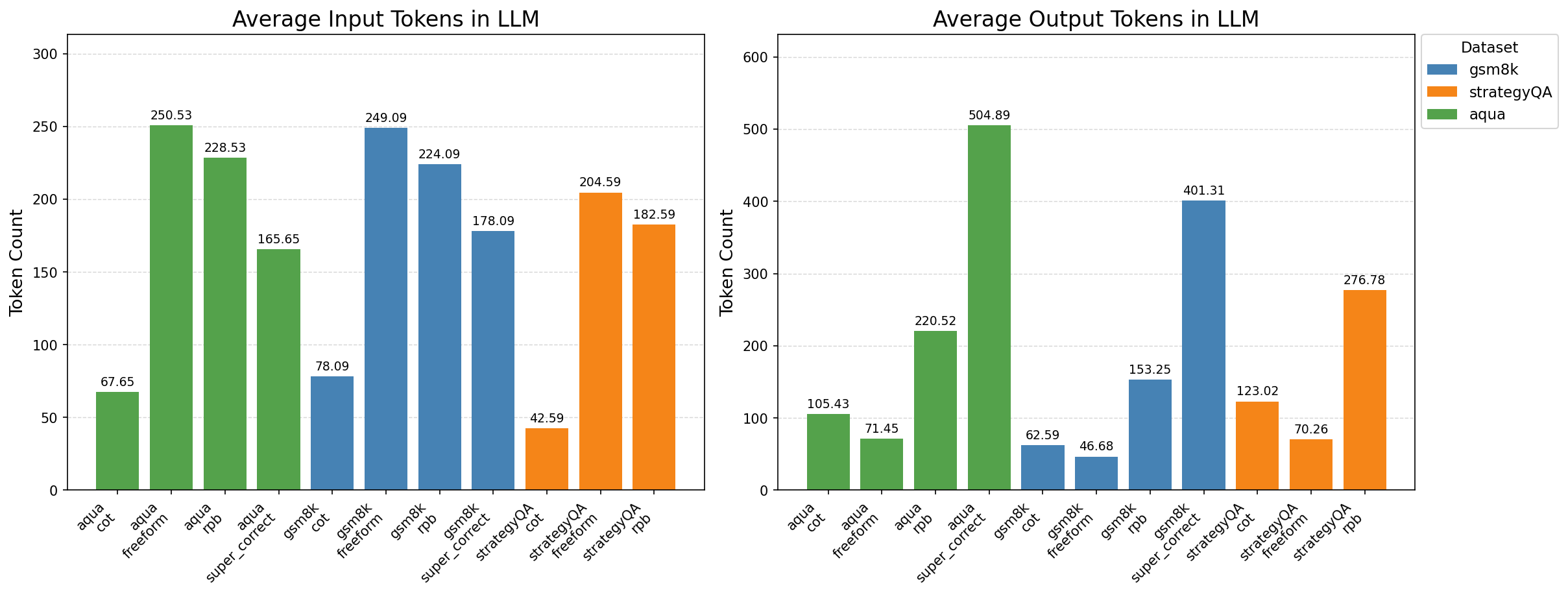}
  \vspace{0.8em}
  \includegraphics[width=\linewidth]{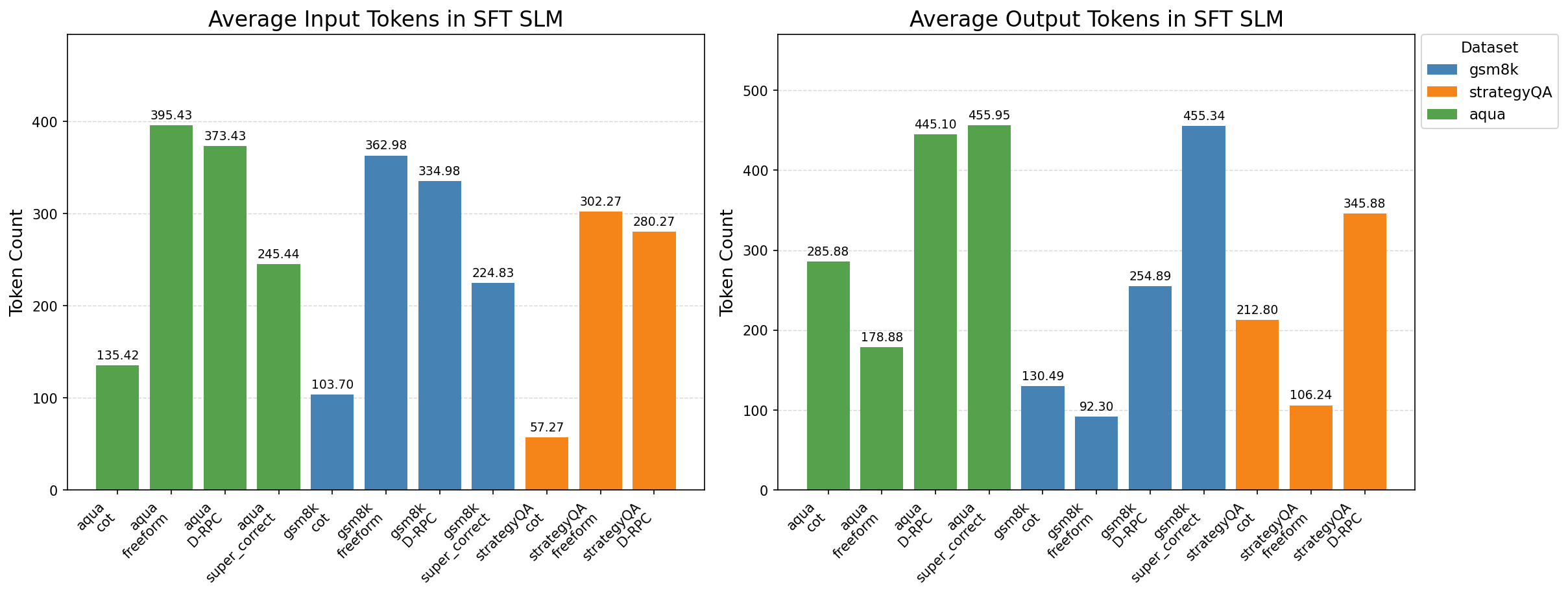}
  \caption{Token usage analysis under different reasoning supervision strategies. 
  \textbf{Top:} Average input and output token usage of the teacher LLM during rationale generation.
  \textbf{Bottom:} Average input and output token usage of the SFT student model at evaluation time.}
  \label{fig:token_analysis}
\end{figure*}

\noindent\textbf{Token cost: D-RPC improves efficiency relative to template-heavy supervision.}
\label{app:token_cost}
Figure~\ref{fig:token_analysis} compares average input/output token usage for both the teacher LLM during rationale generation and the SFT student at evaluation time. While D-RPC includes retrieved reasoning paths and thus uses substantially more teacher input tokens than CoT and a comparable amount to Freeform, it substantially reduces teacher output length relative to SuperCorrect, yielding an overall reduction of roughly 33--35\% in total teacher tokens per example across the math datasets. On the student side, D-RPC-trained models produce more explicit rationales than CoT and Freeform on every dataset, while staying shorter on average than SuperCorrect supervision (with the gap largest on GSM8K), avoiding the output-length explosion that increases inference latency.

\begin{table}[t]
\caption{AQUA performance across training fractions under different LoRA ranks ($r{=}128$ vs.\ $r{=}64$). Bold denotes the higher accuracy between the two ranks for each method and training fraction, FV denotes Format Validity.}
\centering
\small
\setlength{\tabcolsep}{2.8pt}
\renewcommand{\arraystretch}{0.95}
\begin{tabular}{lcccc lcccc}
\toprule
\multicolumn{5}{c}{Train (\%) = 10} & \multicolumn{5}{c}{Train (\%) = 60} \\
\cmidrule(lr){1-5}\cmidrule(lr){6-10}
Strategy & r128 Acc & r128 FV & r64 Acc & r64 FV &
Strategy & r128 Acc & r128 FV & r64 Acc & r64 FV \\
\midrule
cot            & \textbf{64.96} & 67.35 & 62.20          & 64.75 &
cot            & 60.24          & 61.45 & \textbf{62.20} & 63.45 \\
freeform       & 61.02          & 61.75 & \textbf{62.20} & 67.23 &
freeform       & \textbf{59.45} & 60.89 & 57.48          & 59.35 \\
D-RPC(ours)            & 61.42          & 63.93 & \textbf{63.78} & 66.80 &
D-RPC(ours)            & \textbf{64.17} & 64.94 & 63.78          & 65.85 \\
super\_correct & 59.06          & 61.70 & \textbf{59.84} & 62.98 &
super\_correct & \textbf{58.66} & 62.45 & 57.87          & 61.44 \\
\midrule
\multicolumn{5}{c}{Train (\%) = 20} & \multicolumn{5}{c}{Train (\%) = 70} \\
\cmidrule(lr){1-5}\cmidrule(lr){6-10}
cot            & 60.24          & 60.96 & \textbf{64.96} & 67.90 &
cot            & \textbf{66.93} & 69.11 & 61.42          & 61.51 \\
freeform       & 57.87          & 59.27 & \textbf{58.27} & 60.41 &
freeform       & 59.06          & 59.52 & \textbf{63.78} & 65.06 \\
D-RPC(ours)            & \textbf{63.78} & 68.22 & 62.20          & 63.31 &
D-RPC(ours)            & 65.75          & 66.80 & \textbf{68.50} & 70.45 \\
super\_correct & \textbf{57.87} & 61.11 & 56.30          & 58.90 &
super\_correct & \textbf{58.66} & 63.36 & 57.87          & 61.25 \\
\midrule
\multicolumn{5}{c}{Train (\%) = 30} & \multicolumn{5}{c}{Train (\%) = 80} \\
\cmidrule(lr){1-5}\cmidrule(lr){6-10}
cot            & \textbf{64.96} & 66.80 & 62.20          & 63.45 &
cot            & \textbf{65.35} & 65.61 & 61.81          & 62.06 \\
freeform       & \textbf{59.06} & 60.73 & 57.09          & 59.67 &
freeform       & \textbf{62.99} & 63.75 & 59.45          & 61.13 \\
D-RPC(ours)            & \textbf{66.93} & 68.00 & 62.60          & 65.16 &
D-RPC(ours)            & \textbf{66.14} & 68.57 & 62.99          & 64.00 \\
super\_correct & \textbf{61.02} & 64.44 & 54.72          & 57.69 &
super\_correct & \textbf{64.96} & 66.39 & 58.27          & 61.18 \\
\midrule
\multicolumn{5}{c}{Train (\%) = 40} & \multicolumn{5}{c}{Train (\%) = 90} \\
\cmidrule(lr){1-5}\cmidrule(lr){6-10}
cot            & \textbf{66.54} & 68.15 & 63.78          & 64.29 &
cot            & \textbf{65.35} & 65.61 & 64.17          & 64.68 \\
freeform       & \textbf{60.63} & 62.35 & 58.27          & 58.96 &
freeform       & \textbf{58.27} & 60.16 & 57.87          & 58.57 \\
D-RPC(ours)            & 66.14          & 67.74 & \textbf{67.72} & 69.80 &
D-RPC(ours)            & 67.72          & 68.80 & \textbf{68.11} & 70.33 \\
super\_correct & \textbf{61.02} & 63.45 & 59.45          & 61.70 &
super\_correct & \textbf{62.99} & 65.38 & 60.24          & 63.09 \\
\midrule
\multicolumn{5}{c}{Train (\%) = 50} & \multicolumn{5}{c}{Train (\%) = 100} \\
\cmidrule(lr){1-5}\cmidrule(lr){6-10}
cot            & 62.60          & 63.10 & \textbf{66.14} & 66.67 &
cot            & \textbf{67.72} & 67.98 & 65.75          & 66.01 \\
freeform       & 57.48          & 58.87 & \textbf{57.87} & 58.80 &
freeform       & \textbf{64.57} & 65.08 & 56.30          & 57.20 \\
D-RPC(ours)            & \textbf{64.17} & 65.46 & 61.42          & 64.46 &
D-RPC(ours)            & \textbf{69.69} & 70.68 & 66.14          & 67.20 \\
super\_correct & \textbf{57.87} & 61.44 & 54.33          & 56.54 &
super\_correct & \textbf{60.63} & 62.86 & 56.30          & 59.17 \\
\bottomrule
\end{tabular}

\label{tab:aqua_r128_r64_fv}
\end{table}

\noindent\textbf{LoRA Setting: How LoRA Rank and Alpha impact the results.}
\label{app:lora_impact}
Table~\ref{tab:aqua_r128_r64_fv} compares the rankings of LoRA at the same training ratios of r=64 and r=128. The impact of ranking depends on the training data volume. At low data ratios (10-20\%), r=64 outperforms r=128, consistent with the phenomenon that the low-ranked adapter acts as an implicit regularizer to resist fitting noise. As the training ratio increases, the trend shifts to capacity-constrained adaptation, where r=128 gains a significant advantage and dominates at 80-100\% data ratios. This suggests that r=64 underfits once sufficient supervision is provided. At all data ratios, D-RPC remains the strongest policy, achieving the best accuracy across most fractions and peaking at r=128 (69.69\% accuracy) while maintaining good format validity. This indicates that its structured guidance is robust to ranking selection and can benefit from additional adapter capacity as the amount of data increases.

\noindent\textbf{Implications for reasoning space compression.}
Taken together, experiments demonstrate that the effectiveness of D-RPC depends critically on controlled reasoning space compression. Initializing the reasoning path bank with a diverse yet compact set of reasoning paths yields the best balance between coverage and consistency, improving both average accuracy and training stability. These findings support the central design principle of D-RPC: reasoning paths should be selectively reused and constrained, rather than maximized, to serve as effective, low-variance supervision for SLM distillation.

\newpage
\section{Proof of Proposition~\ref{prop:entropy-baseline}}
\label{app:entropy-proof}

\begin{proof}
By the chain rule for conditional entropy,
\begin{align*}
&\Hc(Y\mid X)=\Hc(\Pi,R,A\mid X)=\Hc(\Pi\mid X)+\Hc(R\mid X,\Pi)+\Hc(A\mid X,\Pi,R).
\end{align*}
Under Assumption~\ref{assump:bank-support}, for each \(x\), the conditional distribution \(\Pi\mid X=x\) is supported on at most
\(K_{\mathrm{bank}}\) values, hence \(\Hc(\Pi\mid X=x)\le \log K_{\mathrm{bank}}\).
Taking expectation over \(X\) yields \(\E[\Hc(\Pi\mid X)]\le \log K_{\mathrm{bank}}\).
Substituting into the chain rule gives Eq.~\eqref{eq:entropy-baseline}.
\end{proof}

\section{Second-order PAC-Bayes: proof sketches for the bounded-loss route}
\label{app:pacbayes-proof}

For completeness, we state the intermediate results referenced in Section~\ref{sec:theory} and used in the derivation of Theorem~\ref{thm:main}.

\begin{proposition}[Bernstein-type PAC-Bayes for bounded losses]
\label{prop:bernstein-pacbayes}
Under Assumption~\ref{assump:bounded-nll}, for any prior \(\mathsf P\) independent of \(S\) and any \(\delta\in(0,1)\),
with probability at least \(1-\delta\) over \(S\sim\mathcal D^n\), simultaneously for all posteriors \(\mathsf Q\),
\begin{equation}
L(\mathsf Q)
\le
\widehat L_S(\mathsf Q)
+
\sqrt{
\frac{
2\,\widehat V_S(\mathsf Q)\big(\KL(\mathsf Q\|\mathsf P)+\ln\frac{1}{\delta}\big)
}{n}
}
+
\frac{
c\,\tau\big(\KL(\mathsf Q\|\mathsf P)+\ln\frac{1}{\delta}\big)
}{n}.
\label{eq:bernstein-pacbayes}
\end{equation}
where \(c>0\) is a universal numerical constant and \(\widehat V_S(\mathsf Q):=\E_{\theta\sim\mathsf Q}[\frac1n\sum_i(\ell(\theta;Z_i)-\widehat L_S(\theta))^2]\).
\end{proposition}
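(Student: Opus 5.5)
The plan is to prove Proposition~\ref{prop:bernstein-pacbayes} by the exponential-moment route with an empirical-Bernstein inequality, using the Donsker--Varadhan change of measure to pass from a fixed $\theta$ to all posteriors at once. I would then optimize over a free temperature $\lambda$ with a union bound over a geometric grid.

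\textbf{Step 1 (per-parameter exponential moment).} Fix $\theta$ and write $X_i=\ell(\theta;Z_i)-L(\theta)$. Assumption~\ref{assump:bounded-nll} gives $|X_i|\le\tau$. Since $\phi(u)=(e^u-1-u)/u^2$ is increasing, for $0<\lambda\le 1/\tau$ we get $\E e^{\lambda X}\le \exp(\lambda^2\phi(\lambda\tau)\,\Var(\ell(\theta;Z)))$. Taking products over the i.i.d.\ sample,
\[
\E_S\exp\Bigl(\lambda n\bigl(L(\theta)-\widehat L_S(\theta)\bigr)-\lambda^2 n\,\phi(\lambda\tau)\,V(\theta)\Bigr)\le 1,
\]
where $V(\theta)$ is the population variance.

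\textbf{Step 2 (change of measure).} Integrate the Step~1 inequality against $\mathsf P$ and swap the expectations by Fubini; this is legitimate because $\mathsf P$ does not depend on $S$. Markov's inequality then holds with probability $1-\delta$. Apply Donsker--Varadhan, $\E_{\mathsf Q} f\le \KL(\mathsf Q\|\mathsf P)+\ln\E_{\mathsf P}e^f$, which is valid simultaneously for all $\mathsf Q$. This yields
\[
L(\mathsf Q)\le \widehat L_S(\mathsf Q)+\lambda\,\phi(\lambda\tau)\,V(\mathsf Q)+\frac{\KL+\ln(1/\delta)}{\lambda n},
\qquad V(\mathsf Q):=\E_{\mathsf Q}V(\theta).
\]

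\textbf{Step 3 (population to empirical variance).} This is the main obstacle. I would prove a second PAC-Bayes bound, by the same Steps~1--2 argument, for the nonnegative bounded variable $(\ell-L)^2\le\tau^2$. Following Tolstikhin--Seldin, I would work with the unbiased pairwise estimator $\frac{1}{n(n-1)}\sum_{i<j}(\ell_i-\ell_j)^2$, which handles the dependence coming from the empirical mean; the self-bounding property of the variance keeps the relative deviation controlled. Solving the resulting quadratic inequality in $\sqrt{V(\mathsf Q)}$ gives
\[
V(\mathsf Q)\le \widehat V_S(\mathsf Q)+\sqrt{\widehat V_S(\mathsf Q)}\,O\!\Bigl(\tau\sqrt{\tfrac{\KL+\ln(1/\delta)}{n}}\Bigr)+O\!\Bigl(\tfrac{\tau^2(\KL+\ln(1/\delta))}{n}\Bigr).
\]
The bookkeeping that lets these cross terms be absorbed into $c\tau(\KL+\ln\frac1\delta)/n$ uses $2\sqrt{ab}\le a+b$. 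I expect this step to be the most delicate, and the source of the unspecified universal constant $c$.

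\textbf{Step 4 (optimizing $\lambda$).} The optimal $\lambda^\star=\sqrt{(\KL+\ln\frac1\delta)/(n\widehat V_S)}$ depends on the data, so I take a geometric grid $\lambda_k=2^{-k}/\tau$ and split $\delta$ over the grid. The grid has $O(\log n)$ points; its $\ln\log n$ cost is either absorbed into $c$ or avoided by the standard trick of bounding grid size in terms of $\KL$. If $\lambda^\star\le1/\tau$, choosing the nearest grid point and using $\phi(\lambda\tau)\le\phi(1)<1$ gives the $\sqrt{2\widehat V(\KL+\ln\frac1\delta)/n}$ term, with the leading constant near $2$ after slack is absorbed. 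If $\lambda^\star>1/\tau$, the variance is small and $\lambda=1/\tau$ gives a pure $\tau(\KL+\ln\frac1\delta)/n$ term. Finally, union-bound the two events with $\delta/2$ each and rename constants; the resulting $\ln2$ is absorbed into $c$.
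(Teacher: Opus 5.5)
\textbf{Gap: the absorption step in Step~4 fails.} Your Steps~1--2 match the paper's sketch (Bernstein mgf bound, Donsker--Varadhan, Markov). You also correctly flag the two things that sketch passes over: replacing the population variance by $\widehat V_S(\mathsf Q)$, and choosing the temperature after $\mathsf Q$ is known. The problem is how you close Step~4. Everything you plan to ``absorb into $c$'' ends up either inside the square root next to $\widehat V_S(\mathsf Q)$ or as a multiplicative factor on that term. Neither is lower order.

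Write $x=\KL(\mathsf Q\|\mathsf P)+\ln\frac1\delta$.
\begin{itemize}
\item The $\delta/2$ split turns $x$ into $x+\ln 2$.
\item The grid turns $x$ into $x+\ln(1/w_{k^\star})$, where the selected index is $k^\star\approx\frac12\log_2\bigl(n\widehat V_S/(\tau^2x)\bigr)$. This depends on $n\widehat V_S/\tau^2$, not on $\KL$ alone, so no $\KL$-dependent allocation of $\delta$ removes the resulting $\ln\log n$. That term is not a universal constant anyway.
\item An additive constant $a>0$ under the root costs $\sqrt{2\widehat V_S(x+a)/n}-\sqrt{2\widehat V_S x/n}$. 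When $\widehat V_S\asymp\tau^2$ and $x\asymp 1$, this is of order $\tau/\sqrt n$, while the slack in $c\tau x/n$ is only of order $\tau/n$.
\end{itemize}

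The leading coefficient has the same problem. For frozen $\phi$, $\lambda\phi V+x/(\lambda n)$ has minimum $2\sqrt{\phi Vx/n}$, which equals $\sqrt{2Vx/n}$ only when $\phi=\frac12$. With $\phi\le\phi(1)=e-2$ you get a coefficient of at least $2\sqrt{e-2}\approx1.69$ on $\sqrt{\widehat V_S x/n}$, not $\sqrt2\approx1.41$. The factor-$2$ grid makes it worse, and your own ``leading constant near $2$'' concedes this. The excess cannot be moved into $c\tau x/n$ either.

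Keeping Bernstein's factor $\frac{1}{2(1-\lambda\tau/3)}$ instead of $\phi(1)$ repairs the $\phi$ loss, but the grid loss remains. In the regime above, staying within $O(\tau/n)$ of the displayed bound would need $\gtrsim n^{1/4}$ grid points per octave of $\lambda$, each with weight $1-O(n^{-1/2})$. No fixed grid with a union bound can provide that.

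\textbf{What your argument actually proves.} It gives a Tolstikhin--Seldin-type PAC-Bayes-empirical-Bernstein bound. That bound has a leading coefficient strictly larger than $\sqrt2$, and $\ln(\nu/\delta)$ with $\nu=O(\log n)$ in place of $\ln\frac1\delta$. Step~3 via self-bounding concentration of the sample variance is a reasonable way to get the variance replacement, with two caveats: $\frac{1}{n(n-1)}\sum_{i<j}(\ell_i-\ell_j)^2=\frac{n}{n-1}\widehat V_S$ rather than $\widehat V_S$, and that second bound needs its own grid.

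Reaching the displayed inequality would need a device that optimizes the temperature for all $\mathsf Q$ at once without paying anything under the square root. Mixture or betting arguments avoid the grid but typically pay a $\ln n$-type term instead. As far as I know, the exact form stated (coefficient $\sqrt2$, bare $\ln\frac1\delta$, uniformly in $\mathsf Q$) is not a standard result.

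The paper's own sketch does not supply such a device either; its last step just ``collects terms.'' The defensible fix is to state the weaker inequality your argument supports. It still carries the qualitative $\log K_{\mathrm{bank}}$ dependence used in Theorem~\ref{thm:main}, only with worse constants and an extra $\ln\log n$.
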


\begin{lemma}[Mean--variance domination for bounded losses]
\label{lem:mean-var}
Under Assumption~\ref{assump:bounded-nll},
\(\widehat V_S(\mathsf Q)\le \tau\,\widehat L_S(\mathsf Q)\).
\label{eq:mean-var-dom}
\end{lemma}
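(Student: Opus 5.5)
The statement to prove is Lemma~\ref{lem:mean-var}, $\widehat V_S(\mathsf Q)\le \tau\,\widehat L_S(\mathsf Q)$. I will prove the inequality pointwise in $\theta$ and then average over $\mathsf Q$.

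Fix $\theta$ in the posterior support and abbreviate $\ell_i:=\ell(\theta;Z_i)$ and $\bar\ell:=\widehat L_S(\theta)=\frac1n\sum_i\ell_i$. The empirical variance is
\[
\frac1n\sum_i(\ell_i-\bar\ell)^2=\frac1n\sum_i\ell_i^2-\bar\ell^2\le\frac1n\sum_i\ell_i^2 ,
\]
where the first equality is the usual expansion of the square and the inequality drops the nonpositive term $-\bar\ell^2$. Assumption~\ref{assump:bounded-nll} gives $0\le\ell_i\le\tau$, so $\ell_i^2=\ell_i\cdot\ell_i\le\tau\ell_i$ for each $i$. Summing, $\frac1n\sum_i\ell_i^2\le\tau\bar\ell=\tau\widehat L_S(\theta)$. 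Hence, for every $\theta$ in the support,
\[
\frac1n\sum_i\bigl(\ell(\theta;Z_i)-\widehat L_S(\theta)\bigr)^2\le\tau\,\widehat L_S(\theta).
\]

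Next I take expectation over $\theta\sim\mathsf Q$. The left side becomes $\widehat V_S(\mathsf Q)$ by definition. On the right, $\E_{\theta\sim\mathsf Q}[\widehat L_S(\theta)]=\widehat L_S(\mathsf Q)$, which is the empirical Gibbs risk. Monotonicity of expectation then yields the lemma.

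The only delicate point is that the pointwise bound has to hold $\mathsf Q$-almost surely. This is exactly what Assumption~\ref{assump:bounded-nll} supplies, since it is stated for all $\theta$ in the posterior support. No other step poses an obstacle: measurability and integrability follow from boundedness.
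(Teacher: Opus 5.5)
Your proof is correct and follows the paper's argument: the paper also fixes $\theta$, treats the losses $\ell(\theta;Z_i)$ as a $[0,\tau]$-valued variable $U$ under the uniform distribution on indices, and uses $U^2\le\tau U$ together with dropping $-\E[U]^2$ to get $\Var(U)\le\tau\,\widehat L_S(\theta)$, then averages over $\theta\sim\mathsf Q$. The only cosmetic difference is the order of the two steps: you drop $-\bar\ell^2$ before applying $\ell_i^2\le\tau\ell_i$, and the paper applies $U^2\le\tau U$ first.
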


\begin{corollary}[Mean-dependent second-order PAC-Bayes]
\label{cor:mean-dep-pacbayes}
Substituting Lemma~\ref{lem:mean-var} into Proposition~\ref{prop:bernstein-pacbayes}: with probability at least \(1-\delta\),
\begin{equation}
L(\mathsf Q)
\le
\widehat L_S(\mathsf Q)
+
\sqrt{
\frac{
2\,\tau\,\widehat L_S(\mathsf Q)
\big(\KL(\mathsf Q\|\mathsf P)+\ln\frac{1}{\delta}\big)
}{n}
}
+
\frac{
c\,\tau\big(\KL(\mathsf Q\|\mathsf P)+\ln\frac{1}{\delta}\big)
}{n}.
\label{eq:mean-dep-pacbayes}
\end{equation}
\end{corollary}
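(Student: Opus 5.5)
Since Corollary~\ref{cor:mean-dep-pacbayes} is simply Proposition~\ref{prop:bernstein-pacbayes} with the empirical variance replaced by something larger, I plan to prove it by monotone substitution. The right-hand side of Eq.~\eqref{eq:bernstein-pacbayes} depends on \(\widehat V_S(\mathsf Q)\) only through the term \(\sqrt{2\widehat V_S(\mathsf Q)(\KL(\mathsf Q\|\mathsf P)+\ln\frac1\delta)/n}\). Because \(\KL(\mathsf Q\|\mathsf P)\ge 0\) and \(\ln\frac1\delta>0\) for \(\delta\in(0,1)\), this term is nondecreasing in \(\widehat V_S(\mathsf Q)\). Hence any pathwise upper bound on \(\widehat V_S(\mathsf Q)\) can be inserted on the same probability-\((1-\delta)\) event on which Proposition~\ref{prop:bernstein-pacbayes} holds, simultaneously for all \(\mathsf Q\). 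This costs no extra union bound, since Lemma~\ref{lem:mean-var} is deterministic given \(S\).

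The key step is therefore to verify Lemma~\ref{lem:mean-var}. I will argue pointwise in \(\theta\) first. For fixed \(\theta\) in the posterior support, the empirical variance \(\frac1n\sum_i(\ell(\theta;Z_i)-\widehat L_S(\theta))^2\) equals \(\frac1n\sum_i\ell(\theta;Z_i)^2-\widehat L_S(\theta)^2\), which is at most \(\frac1n\sum_i\ell(\theta;Z_i)^2\). By Assumption~\ref{assump:bounded-nll}, \(0\le\ell\le\tau\), so \(\ell^2\le\tau\ell\). This gives a per-\(\theta\) variance of at most \(\tau\widehat L_S(\theta)\). I will then take \(\E_{\theta\sim\mathsf Q}\) and use linearity together with the definition \(\widehat L_S(\mathsf Q)=\E_{\theta\sim\mathsf Q}\widehat L_S(\theta)\) to obtain \(\widehat V_S(\mathsf Q)\le\tau\widehat L_S(\mathsf Q)\).

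Substituting this bound into the square-root term yields Eq.~\eqref{eq:mean-dep-pacbayes}. The linear \(c\tau(\cdot)/n\) term and the \(\widehat L_S(\mathsf Q)\) term are unchanged.

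The main obstacle is a support issue rather than the algebra. Assumption~\ref{assump:bounded-nll} gives boundedness only for \(\theta\) in the posterior support. For a Gaussian \(\mathsf Q\), that support is all of \(\R^m\), so I need the boundedness to hold \(\mathsf Q\)-almost surely; this is how I will read the assumption, for instance via length normalization or clipping. I also need the bound to hold for every \(\mathsf Q\) considered on the event of Proposition~\ref{prop:bernstein-pacbayes}. I will state explicitly that the corollary inherits the quantifier ``simultaneously for all \(\mathsf Q\)'' with bounded loss on their supports.
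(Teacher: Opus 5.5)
Your proposal is correct and follows essentially the same route as the paper: the paper proves Lemma~\ref{lem:mean-var} pointwise in \(\theta\) using \(\ell^2\le\tau\ell\) on \([0,\tau]\) and discarding the \(-\widehat L_S(\theta)^2\) term, averages over \(\theta\sim\mathsf Q\), and substitutes the result into the square-root term of Proposition~\ref{prop:bernstein-pacbayes}. Your monotonicity check, your observation that no extra union bound is needed because the lemma is deterministic given \(S\), and your \(\mathsf Q\)-almost-sure boundedness caveat are sound refinements the paper leaves implicit; the caveat in fact resolves itself, because Proposition~\ref{prop:bernstein-pacbayes} already needs the loss bounded \(\mathsf P\)-a.s., any \(\mathsf Q\) with \(\KL(\mathsf Q\|\mathsf P)<\infty\) satisfies \(\mathsf Q\ll\mathsf P\), and for infinite KL the bound is vacuous.
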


\begin{corollary}[Explicit \(K_{\mathrm{bank}}\)-dependence]
\label{cor:K-explicit}
Under Assumptions~\ref{assump:bounded-nll}--\ref{assump:near-floor}, the same bound holds with \(\widehat L_S(\mathsf Q)\) replaced by \(M\) (Eq.~\ref{eq:def-M}) in the square-root term.
\label{eq:K-explicit}
\end{corollary}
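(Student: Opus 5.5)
The plan is to obtain the corollary by a purely monotone substitution into Corollary~\ref{cor:mean-dep-pacbayes}; no new concentration argument is needed. Corollary~\ref{cor:mean-dep-pacbayes} holds with probability at least $1-\delta$ \emph{simultaneously for all posteriors} $\mathsf Q$. So I will first fix the good event of probability at least $1-\delta$ on which Eq.~\eqref{eq:mean-dep-pacbayes} holds uniformly in $\mathsf Q$. I will then work deterministically on that event. The only quantity to be modified is the factor $\widehat L_S(\mathsf Q)$ inside the square root. The map $u\mapsto\sqrt{2\tau u(\KL(\mathsf Q\|\mathsf P)+\ln\frac1\delta)/n}$ is nondecreasing for $u\ge 0$, because $\tau>0$ and $\KL+\ln\frac1\delta\ge 0$. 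Hence any upper bound $\widehat L_S(\mathsf Q)\le M$ can be substituted there, while the leading $\widehat L_S(\mathsf Q)$ and the lower-order term are left unchanged.

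The upper bound comes from chaining two earlier results.
\begin{itemize}
\item Assumption~\ref{assump:near-floor} gives $\widehat L_S(\mathsf Q)\le \E[\Hc(Y\mid X)]+\varepsilon$ for the posterior it provides.
\item Proposition~\ref{prop:entropy-baseline}, which uses Assumption~\ref{assump:bank-support}, bounds $\E[\Hc(Y\mid X)]$ by $\log K_{\mathrm{bank}}+\E[\Hc(R\mid X,\Pi)]+\E[\Hc(A\mid X,\Pi,R)]$.
\end{itemize}
Adding $\varepsilon$ to both sides of the second bound gives exactly $M$ of Eq.~\eqref{eq:def-M}, so $\widehat L_S(\mathsf Q)\le M$. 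Nonnegativity of $\widehat L_S(\mathsf Q)$, from Assumption~\ref{assump:bounded-nll}, guarantees we stay in the domain where the square root is monotone. Plugging this in yields the stated bound. Note that Lemma~\ref{lem:mean-var} has already been used inside Corollary~\ref{cor:mean-dep-pacbayes} and needs $0\le\ell\le\tau$, which is again Assumption~\ref{assump:bounded-nll}.

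The step I expect to need the most care is matching the posterior in Assumption~\ref{assump:near-floor} with the posterior to which the bound is applied. Assumption~\ref{assump:near-floor} is existential. Uniformity of Eq.~\eqref{eq:mean-dep-pacbayes} over $\mathsf Q$ resolves this: I apply the bound to the (possibly data-dependent) $\mathsf Q$ witnessing near-floor fit, and it is legitimate because the PAC-Bayes event does not depend on $\mathsf Q$. For use in Theorem~\ref{thm:main} I will read the assumption as holding for $\mathsf Q_S$ itself. I would state explicitly that this is the intended interpretation, which is also how it is checked empirically in Table~\ref{tab:bound-components}.

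A secondary bookkeeping point is consistency of units. The NLL uses natural logarithms, so the entropies and $\log K_{\mathrm{bank}}$ must be taken in nats; under per-token normalization they must also be normalized the same way. Otherwise the inequality $\widehat L_S\le\E[\Hc(Y\mid X)]+\varepsilon$ silently absorbs a constant into $\varepsilon$. I will note that any such mismatch can be folded into $\varepsilon$ without changing the form of the result.
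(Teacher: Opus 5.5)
Your proposal is correct and matches the paper's derivation: you chain Assumption~\ref{assump:near-floor} with Proposition~\ref{prop:entropy-baseline} to get $\widehat L_S(\mathsf Q)\le M$, then substitute this into the square-root term of Corollary~\ref{cor:mean-dep-pacbayes} by monotonicity. Two steps the paper leaves implicit are made precise in your version: you apply the uniform-in-$\mathsf Q$ event to the existentially given posterior, and you state that Theorem~\ref{thm:main} needs the near-floor condition to hold for $\mathsf Q_S$ itself.
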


\subsection{Proof sketch of Proposition~\ref{prop:bernstein-pacbayes}}
Proposition~\ref{prop:bernstein-pacbayes} is a standard Bernstein-type PAC-Bayes inequality for bounded losses.
A typical proof follows these steps:

\begin{enumerate}[leftmargin=*, itemsep=3pt]
\item \textbf{Bernstein mgf bound for bounded variables.}
For a bounded random variable \(U\in[0,\tau]\), one upper bounds
\(\log \E[\exp(\lambda(\E[U]-U))]\) in terms of \(\lambda\), \(\tau\), and \(\Var(U)\),
yielding a Bernstein-style concentration inequality.
\item \textbf{Change of measure (Donsker--Varadhan).}
For any measurable \(f\),
\[
\E_{\theta\sim \mathsf Q}[f(\theta)]
\le
\KL(\mathsf Q\|\mathsf P) + \log \E_{\theta\sim \mathsf P}[\exp(f(\theta))].
\]
\item \textbf{High-probability conversion.}
Apply Markov's inequality and a standard argument to obtain a statement that holds with probability at least \(1-\delta\),
introducing \(\ln(1/\delta)\), and obtain a bound simultaneously for all \(\mathsf Q\).
\item \textbf{Variance-sensitive form.}
Collect terms to obtain a deviation term scaling with \(\sqrt{\widehat V_S(\mathsf Q)}\) plus a lower-order
\(O((\KL+\ln(1/\delta))/n)\) term, where boundedness introduces the factor \(\tau\) in the latter.
\end{enumerate}

\subsection{Proof of Lemma~\ref{lem:mean-var}}
\begin{proof}
Fix \(\theta\) and define \(U\) as a random variable taking values \(\ell(\theta;Z_i)\) under the empirical distribution over indices
\(i\sim\mathrm{Unif}\{1,\dots,n\}\).
Under Assumption~\ref{assump:bounded-nll}, \(U\in[0,\tau]\), hence \(U^2\le \tau U\). Therefore,
\[
\Var(U)=\E[U^2]-\E[U]^2
\le \tau \E[U]-\E[U]^2
\le \tau \E[U].
\]
Noting that \(\E[U]=\widehat L_S(\theta)\), we obtain
\[
\frac1n\sum_{i=1}^n\big(\ell(\theta;Z_i)-\widehat L_S(\theta)\big)^2
= \Var(U)
\le \tau\,\widehat L_S(\theta).
\]
Finally, take expectation over \(\theta\sim\mathsf Q\) to get \(\widehat V_S(\mathsf Q)\le \tau\,\widehat L_S(\mathsf Q)\).
\end{proof}

\subsection{Derivation of Corollary~\ref{cor:mean-dep-pacbayes}}
Corollary~\ref{cor:mean-dep-pacbayes} follows immediately by substituting Lemma~\ref{lem:mean-var}
(Eq.~\eqref{eq:mean-var-dom}) into Proposition~\ref{prop:bernstein-pacbayes}
(Eq.~\eqref{eq:bernstein-pacbayes}).

\subsection{Derivation of Corollary~\ref{cor:K-explicit}}
Starting from Corollary~\ref{cor:mean-dep-pacbayes} (Eq.~\eqref{eq:mean-dep-pacbayes}), apply
Assumption~\ref{assump:near-floor} (Eq.~\eqref{eq:near-floor}) and Proposition~\ref{prop:entropy-baseline}
(Eq.~\eqref{eq:entropy-baseline}) to upper bound \(\widehat L_S(\mathsf Q)\) by
\(\log K_{\mathrm{bank}} + \E[\Hc(R\mid X,\Pi)] + \E[\Hc(A\mid X,\Pi,R)] + \varepsilon\).
Substituting this into Eq.~\eqref{eq:mean-dep-pacbayes} yields Eq.~\eqref{eq:K-explicit}.

\section{Norm control and Gaussian KL: proofs}
\label{app:kl-proof}

\subsection{Proof of Proposition~\ref{prop:norm-control}}
\begin{proof}
By optimality of \(\hat\theta_S\),
\[
F_S(\hat\theta_S)\le F_S(0)
\quad\Longrightarrow\quad
\widehat L_S(\hat\theta_S)+\frac{\lambda}{2}\|\hat\theta_S\|_2^2 \le \widehat L_S(0).
\]
Since \(\widehat L_S(\hat\theta_S)\ge 0\), we conclude \(\frac{\lambda}{2}\|\hat\theta_S\|_2^2 \le \widehat L_S(0)\),
which implies Eq.~\eqref{eq:norm-control}.
\end{proof}

\subsection{Proof of Proposition~\ref{prop:kl-bound}}
\begin{proof}
For isotropic Gaussians \(\mathcal N(\mu,\sigma_{\mathrm{post}}^2 I_m)\) and \(\mathcal N(0,\sigma_0^2 I_m)\),
the KL divergence has the closed form
\[
\KL
=
\frac{\|\mu\|_2^2}{2\sigma_0^2}
+
\frac{m}{2}\Big(\rho-1-\ln\rho\Big),
\qquad
\rho=\frac{\sigma_{\mathrm{post}}^2}{\sigma_0^2}.
\]
Setting \(\mu=\hat\theta_S\) yields the equality in Eq.~\eqref{eq:kl-bound}.
The inequality follows by substituting Proposition~\ref{prop:norm-control} into the mean-shift term
\(\|\hat\theta_S\|_2^2/(2\sigma_0^2)\).
\end{proof}

\section{Formal justification of frozen-model alignment under banking}
\label{app:frozen-align}

This appendix provides the formal proofs behind Proposition~\ref{prop:frozen-align} (stated in Section~\ref{sec:theory}).
The goal is to justify the following condition by a cross-entropy decomposition and designable sufficient conditions.

\begin{assumption}[Frozen-model alignment under banking (restated)]
\label{assump:frozen-align}
\(
\E_{S\sim\mathcal D_{\mathrm{bank}}^n}[\widehat L_S(0)]
\le
\E_{S\sim\mathcal D_{\mathrm{free}}^n}[\widehat L_S(0)].
\)
\end{assumption}

\subsection{Population form and cross-entropy decomposition}

Define the frozen-model population cross-entropy in regime \(\star\in\{\mathrm{free},\mathrm{bank}\}\) by
\begin{equation*}
\mathcal L_0^{(\star)}
:=
\E_{X}\E_{Y\sim p^{(\star)}(\cdot\mid X)}\big[-\log p_0(Y\mid X)\big],
\, Y=(\Pi,R,A).
\label{eq:pop-cross-entropy}
\end{equation*}
Assuming integrability so that \(\E_{S\sim(\mathcal D_\star)^n}[\widehat L_S(0)]=\mathcal L_0^{(\star)}\),
Assumption~\ref{assump:frozen-align} follows from
\begin{equation}
\mathcal L_0^{(\mathrm{bank})} \le \mathcal L_0^{(\mathrm{free})}.
\label{eq:pop-frozen-align}
\end{equation}

For any target conditional \(p^\star(\cdot\mid x)\),
\begin{equation*}
\E_{Y\!\sim \! p^\star\!(\cdot\mid x)}[\!-\!\log p_0(Y\!\!\mid\! x)]
\!=\!
\Hc\!\big(p^\star(\!\cdot\!\mid\!x)\big)\!+\KL\!\big(p^\star(\!\cdot\!\mid x)\|p_0(\!\cdot\!\mid \!x)\big),
\label{eq:cross-entropy-decomp}
\end{equation*}
and averaging over \(X\) yields
\begin{equation}
\mathcal L_0
=
\E[\Hc(Y\mid X)]
+
\E\!\left[\KL\!\big(p^\star(\cdot\mid X)\|p_0(\cdot\mid X)\big)\right].
\label{eq:cross-entropy-decomp-avg}
\end{equation}
Thus, Eq.~\eqref{eq:pop-frozen-align} is equivalent to
\begin{equation}
\Delta \Hc + \Delta \KL \le 0,
\label{eq:deltaH-deltaKL}
\end{equation}
where \(\Delta \Hc\) and \(\Delta \KL\) denote the bank-minus-free differences of the two terms in
Eq.~\eqref{eq:cross-entropy-decomp-avg}.
Proposition~\ref{prop:entropy-baseline} implies the \(\Pi\)-banking mechanism reduces (or upper-bounds) the path-uncertainty component
through \(\Hc(\Pi\mid X)\le \log K_{\mathrm{bank}}\), while Assumption~\ref{assump:frozen-align} additionally requires that any increase
in mismatch to \(p_0\) does not offset the entropy reduction.

\subsection{Sufficient designable conditions}

\noindent\textbf{Condition D1 (Bank as projection under the frozen model).}
Let \(\mathfrak P_{\mathrm{bank}}(K_{\mathrm{bank}})\) denote the family of conditional distributions supported on a
\(K_{\mathrm{bank}}\)-template path bank:
\[
\!\mathfrak P_{\mathrm{bank}}(\!K_{\mathrm{bank}})
\!\!:=\!\!
\Big\{
p(\!\cdot\!\mid x)\!:\! \mathrm{supp}(\Pi\!\mid\!\! X\!=\!x)\!\subseteq\! \mathcal B,|\mathcal B|\!=\!\!K_{\mathrm{bank}\!}\!
\Big\},
\]
with no restriction on \(R,A\) given \((X,\Pi)\).
Define \(p^\star_{\mathrm{bank}}\) as
\begin{equation}
p^\star_{\mathrm{bank}}
\in
\argmin_{p\in\mathfrak P_{\mathrm{bank}}(K_{\mathrm{bank}})}
\ \E_{X}\E_{Y\sim p(\cdot\mid X)}[-\log p_0(Y\mid X)].
\label{eq:bank-projection}
\end{equation}
Then \(\mathcal L_0^{(\mathrm{bank})}\le \mathcal L_0^{(\mathrm{free})}\) holds by construction.

\noindent\textbf{Condition D2 (Canonicalization improves frozen likelihood pointwise).}
If there exists a map \(c\) on targets such that for all \((x,y)\),
\begin{equation}
-\log p_0(c(y)\mid x) \le -\log p_0(y\mid x),
\label{eq:canonicalization}
\end{equation}
and the banked regime replaces \(Y\) by \(c(Y)\) (or restricts support to such canonical forms), then
Eq.~\eqref{eq:pop-frozen-align} follows immediately by taking expectations.

\noindent\textbf{Condition D3 (Entropy drop dominates mismatch increase).}
From Eq.~\eqref{eq:deltaH-deltaKL}, it suffices that
\[
\E[\Hc_{\mathrm{free}}(Y\mid X)]-\E[\Hc_{\mathrm{bank}}(Y\mid X)]
\ \ge\
\E[\KL_{\mathrm{bank}}-\KL_{\mathrm{free}}],
\]
i.e., any increase in mismatch to \(p_0\) is smaller than the entropy reduction due to banking.

Under any of Conditions D1--D3, Assumption~\ref{assump:frozen-align} is justified, establishing Proposition~\ref{prop:frozen-align}.

\section{Proof sketch of Theorem~\ref{thm:main}}
\label{app:main-proof}

\begin{proof}[Proof sketch]
Fix \(\delta\in(0,1)\).
On the event where Corollary~\ref{cor:K-explicit} holds (probability at least \(1-\delta\)),
set \(\mathsf Q=\mathsf Q_S\) and apply Proposition~\ref{prop:kl-bound} to upper bound
\(\KL(\mathsf Q_S\|\mathsf P)\) by
\[
\frac{1}{\sigma_0^2\lambda}\widehat L_S(0) + \frac{m}{2}\big(\rho-1-\ln\rho\big).
\]
Substitute this into the deviation terms in Eq.~\eqref{eq:K-explicit}.
Then use the definitions of \(M\) (Eq.~\eqref{eq:def-M}) and \(N\) (Eq.~\eqref{eq:def-N}) to obtain Eq.~\eqref{eq:main}.
\end{proof}

\clearpage

\end{document}